\documentclass{article} 
\usepackage{iclr2026_conference,times}

\usepackage{amsmath,amsfonts,bm}

\def\eqref#1{equation~\ref{#1}}

\def\1{\bm{1}}

\DeclareMathAlphabet{\mathsfit}{\encodingdefault}{\sfdefault}{m}{sl}
\SetMathAlphabet{\mathsfit}{bold}{\encodingdefault}{\sfdefault}{bx}{n}

\usepackage{hyperref}
\usepackage{url}

\usepackage{graphicx}
\usepackage{subcaption}
\usepackage{amsmath}
\usepackage{wrapfig}
\usepackage{float}
\usepackage{placeins}
\usepackage{enumitem}
\usepackage{xcolor}
\usepackage{booktabs}
\usepackage{siunitx}
\usepackage{moresize}
\usepackage{array}

\usepackage{amssymb}
\usepackage{mathtools}
\usepackage{amsthm}
\usepackage{tabularx}

\usepackage[capitalize,noabbrev]{cleveref}

\newtheorem{theorem}{Theorem}[section]
\newtheorem{proposition}[theorem]{Proposition}
\newtheorem{lemma}[theorem]{Lemma}

\newcommand\numberthis{\addtocounter{equation}{1}\tag{\theequation}}

\iclrfinalcopy

\title{Interestingness as an Inductive Heuristic for Future Compression Progress}

\author{Vincent Herrmann \& Jürgen Schmidhuber\vspace{1mm}\\
The Swiss AI Lab IDSIA/USI/SUPSI\\
Lugano, Switzerland\vspace{1mm}\\
King Abdullah University of Science and Technology\\
Thuwal, Saudi Arabia
}

\begin{document}

\maketitle

\begin{abstract}

One of the bottlenecks on the way towards recursively self-improving systems is the challenge of \textit{interestingness}: the ability to prospectively identify which tasks or data hold the potential for future progress. 
We formalize interestingness as an inductive heuristic for future compression progress and investigate its predictability using tools from Kolmogorov Complexity and Algorithmic Statistics. 
By analyzing complexity-runtime profiles under Length, Algorithmic, and Speed priors, we demonstrate that the \textit{inductive property of interestingness}---the capacity for past progress to signal future discovery---is theoretically viable and empirically supported. 
We prove that expected future progress depends exponentially on the recency of the last observed breakthrough. 
Furthermore, we show that the Algorithmic Prior is significantly more optimistic than the Length Prior, yielding a quadratic increase in expected discovery for the same observed profile. 
These findings are experimentally confirmed across three diverse universal computational paradigms.
\end{abstract}

\section{Introduction}
\label{sec:introduction}

\begin{wrapfigure}[22]{r}{0.5\textwidth}
\vspace{-9mm}
\begin{subfigure}[t]{0.5\textwidth}
\centering
\includegraphics[height=5.5cm]{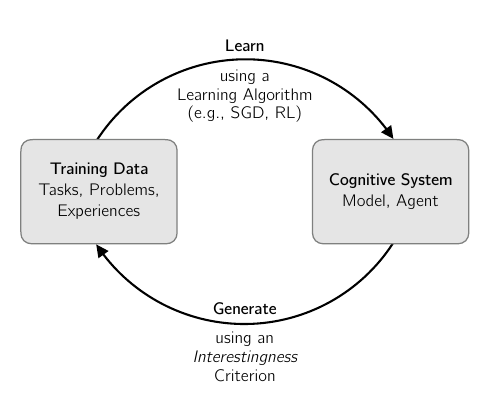}
\end{subfigure}
\vspace{-2mm}
\caption{
Minimalist depiction of a self-sustaining learning cycle. 
The Learning phase derives new skills or patterns from data, while the Generation phase creates novel artifacts. 
To sustain true open-endedness, the generation process must be guided by a criterion that distinguishes learnable structure from noise or already-acquired knowledge.
}
\label{fig:phases}
\end{wrapfigure}

A common notion of how a general recursively self-improving open-ended intelligence might be achieved is, in very simple terms, as a cycle between two alternating phases (see Figure~\ref{fig:phases}):
In the learning phase, a system uses a learning algorithm---e.g., gradient descent or Reinforcement Learning (RL) techniques---to extract patterns and regularities in the available data, or to gain the skills necessary to solve available problems and tasks.
In the generation phase, the insights from the learning phase are used to produce novel artifacts (synthetic data, new tasks, hitherto unsolved problems).
From these, the system can then again learn new capabilities, ideally leading to never-ending progress.
To achieve true open-endedness, this cycle must be fully autonomous. 
It cannot rely on human-in-the-loop filtering, hand-crafted curricula, or synthetic data created by researchers.

One example of such a system would be a Large Language Model (LLM) that is iteratively trained on self-generated data, see, for example~\citet{wang2023self, zelikman2024star}.
It is well known that a naive realization of such a setup leads to model collapse~\citep{shumailov2024ai, dohmatob2024model}. 
However, there exist efforts to modify the generation process, or filter the generated data in such a way that continued learning is possible~\citep{lin2024not, herrmann2025measuring, herrmann2025multiple}.
Another example would be an RL agent that pursues its own goals in an environment.
The goals are chosen according to some intrinsic motivation criterion~\citep{schmidhuber1991possibility, pathak2017curiosity, colas2022autotelic}.

With the success of large-scale models trained with gradient descent and RL techniques, it can be argued that we have a good handle on the learning phase.
Developing a generation phase that leads to continued progress, on the other hand, remains an open problem.
In the framing of \citet{hughes2024position}, how can we generate artifacts that are both \textit{novel and learnable}? 
What criterion allows us to select the samples, problems, tasks from which the system can learn something meaningful?
How can we distinguish between what is unlearnable, already learned, and so-far unknown but learnable?
How can we tell if such an artifact is \textit{interesting}?

In this paper, we argue that interestingness is a prospective inductive heuristic for future compression progress. 
Previous measures of interestingness often provide a post-hoc account of what was learned, while a self-improving system requires a way to predict what will be learned \textit{before} committing significant computational resources.
Our contributions are as follows:
\begin{itemize}
    \item We formalize the Inductive Property of Interestingness using Complexity and Runtime profiles, showing that future compression breakthroughs can be statistically predicted from the trajectory of past progress.
    \item We prove that the primary predictor of future progress is stagnation length: the probability of a new compression "drop" vanishes exponentially as the gap since the last progress increases.
    \item We analyze how different priors over computable objects---Length, Algorithmic, and Speed---govern this predictability. 
    We demonstrate that the Algorithmic Prior is significantly more ``optimistic'' than the Length Prior, yielding a quadratic increase in expected future progress for the same observed profile.
    \item We provide empirical validation across three computationally universal paradigms (2-Tag systems, Rule 110 cellular automata, and Brainfuck), confirming that these theoretical trends hold in physically realizable regimes.
\end{itemize}

We begin by discussing existing Interestingness measures, and arguing for the importance of prospective criteria.

\section{Proposed Criteria of Interestingness}
\label{sec:criteria}

In the following, we refer to the artifacts a cognitive system (the \textit{subject}) learns from as \textit{objects}.
This highlights the fact that interestingness is not necessarily an intrinsic property of an object in isolation, but rather a relational property between the object's structure and the subject's current internal state.

\paragraph{Count-based, Space Coverage, and Maximum Entropy}
An object can be deemed interesting if it is \textit{uncommon}---that is, if it has been encountered fewer times than other objects~\citep{Sutton1990IntegratedMA, bellemare2016unifying, tang2017exploration}. 
In high-dimensional spaces, this requires a density model to quantify novelty via smoothing. 
If an agent seeks to maximize the entropy of its state-visitation distribution, it will, in the limit, converge towards a policy that encounters all reachable objects with equal probability~\citep{hazan2019provably, mutti2023unsupervised}.
In certain restricted settings, this is a reasonable measure: 
if it is possible for the cognitive system to encounter all reachable objects multiple times, uncommon objects yield the highest surprise in the Shannon sense. 
However, this does not naturally lead to open-ended learning; the system eventually converges towards a uniform occupancy and learning stagnates.
Furthermore, if there are more possible objects than can ever be encountered, this measure of interestingness completely depends on the chosen form of coarse-graining or smoothing.
In other words, it depends on a specific \textit{model}. 

\paragraph{Prediction Error, Adversarial Criterion}
This notion moves beyond mere counting by introducing a predictive model.
Here, interestingness is the inverse of the probability assigned by the model (the ``adversarial'' criterion). 
An object that is poorly predicted could be considered interesting~\citep{Schmidhuber:90diff1, Schmidhuber:90sab, pathak2017curiosity}.
However, this fails to distinguish between epistemic uncertainty (reducible through learning) and aleatoric uncertainty (irreducible randomness). 
This is the ``Noisy TV'' problem~\citep{Schmidhuber:10ieeetamd}: 
a subject using pure prediction error as a reward will get stuck observing a source of pure noise (like a static TV) because it is always unpredictable, yet provides zero learnable structure.

Common approaches to tackle this problem share the insight that an object loses its interestingness once nothing new can be learned about it. 

\paragraph{Information Gain, Bayesian Surprise}

If our model is set up in a probabilistic way, for example as a belief distribution over a set of hypotheses, then we can measure the information gain an object yields.
Concretely, this is quantified as the Kullback-Leibler divergence between the posterior distribution, given the object, and the prior distribution.
In broader terms, an object is interesting to the extent it leads to new insights and changes the model's ``world view''~\citep{Storck:95, itti:05}.
Once nothing new can be learned from an object, the information gain subsides.
Information gain can be measured not just over a set of fixed concrete hypotheses, but also over latent variables~\citep{schmidhuber2003exploring, Tishby2015DeepLA, pmlr-v119-sekar20a, herrmann2025measuring} or the predictions of existing data~\citep{herrmann2025multiple}.

\paragraph{Learning Progress, Competence Progress}

Closely related to information gain is the notion of learning progress~\citep{Schmidhuber:91singaporecur, oudeyerkaplan07, stout2010competence}.
Instead of framing insight in terms of probabilistic inference and differences between posterior and prior, we can ask: Does the object lead to progress on some pre-defined objective function?
An object is then interesting to the extent it improves the model's performance.
Whether this is a reasonable measure of interestingness depends, of course, on the specific choice of objective function.
Certainly not all tasks or objective functions are rich enough to allow open-ended learning.
However, for many models---especially the ones we might want to use in the setups we mentioned, such as LLMs or world models~\citep{learningtothink2015, ha2018recurrent, bruce2024genie}---there is one dominant objective function: the negative log-likelihood (NLL).
It can be argued that the success of the NLL as a training objective for models is due to the strong connection between learning and compression~\citep{hutter2005universal, deletanglanguage}.

\paragraph{Compression Progress}

This leads us to the concept of interestingness as compression progress~\citep{Schmidhuber:09abials, Schmidhuber:06cs}. 
An object might be deemed interesting if it allows the model to better losslessly compress all available data. 
For a full compression of the data, we must account for the size of the model itself: 
what we measure is the number of bits required to describe the model in addition to the bits required to encode the data given the model. 
This two-part encoding of objects is the core idea behind the Minimum Description Length (MDL) principle~\citep{wallace1968information, rissanen1978modeling}.
The compression progress associated with an object is thus the reduction of the total encoding size---the sum of the model complexity and the data residual---once the model has incorporated the object.
A rich description of compute-bounded and observer-dependent MDL models has recently been presented by \citet{finzi2026entropy}.

There are also alternative views of Interestingness, which do not require a strict definition or quantification, but may yet contribute to open-ended and self-improving systems.

\paragraph{Open-endedness emerges from social multi-agent interactions}
\label{app:social}

Here, Interestingness is not treated as an intrinsic property of objects, or a relational property between a subject and an object, but an emergent phenomenon of social interaction.
Such settings span from competitive two-player games~\citep{Schmidhuber:97interesting, schmidhuber1999artificial} to complex multi-agent ecosystems involving theory-of-mind and embedded agency~\citep{meulemans2025embedded}.
There, learning potential is often quantified through disagreement: 
if a population of models exhibits high variance in their predictions of an object, that object is deemed to contain unresolved, learnable structure~\citep{pmlr-v97-pathak19a, shyam2019model, pmlr-v119-sekar20a, sancaktar2022curious}. 
Similarly, the notion of impressiveness~\citep{lehman2012beyond} suggests that an artifact is interesting if it serves as a ``proof of work''---a hard-to-reach state that signals computational effort to other agents.
While we recognize the power of these social dynamics, we argue they do not bypass the fundamental problem of interestingness, they merely decentralize it. 
Whether an agent is filtering synthetic data or evaluating a peer's ``impressive'' achievement, it is still performing a prospective assessment of future progress. 
Even a co-evolutionary ecosystem~\citep{pugh2016quality, brant2017minimal} can be viewed as a single macro-agent in a self-generating loop. 
Therefore, we believe that the inductive properties of learnability we analyze in the following sections remain relevant to these social systems.

\paragraph{We can predict human-distilled Interestingness via LLMs}
\label{app:omni}

Another compelling alternative to explicit algorithmic measures is the use of LLMs as a ``Model of Interestingness''. 
The OMNI framework~\citep{Zhang2023omni, faldor2024omni} argues that because LLMs are trained on large amounts of human-generated data, they have already internalized nuanced human notions of what is worthwhile and novel. 
This has been shown to work in practice.
But how far do we expect this approach to generalize?
OMNI fundamentally emulates human capacity for nuanced judgment; it effectively asks: ``What would a human want to learn next?'' 
But what yields learning progress for a human is not necessarily identical to what yields progress for an artificial subject. 
Humans and our current AI systems possess vastly different learning architectures and existing knowledge bases.
Perhaps we will eventually converge to a shared conception of interestingness among general intelligences---whether artificial or human.
But if the goal is to \textit{reach} general intelligence through an open-ended process, we cannot assume that the current state of LLMs is already sufficiently close to possess such a universal notion of interestingness.
Furthermore, LLMs currently have no privileged insight into their own internal compression frontiers---they can assess data solely based on an external heuristic rather than an introspective assessment of their own potential for insight. 
True autonomous open-endedness likely requires the ability to identify potential progress in domains that human intuition has not yet charted.
There it is not clear, to what extent an LLM can distinguish potentially promising avenues from dead ends.

\subsection{Interestingness as Expected Compression Progress}

Some criteria discussed above---namely, information gain, learning progress, and compression progress---are closely related in this context: 
they all measure shifts in the subject's uncertainty or description length. 
Information gain provides a probabilistic view of these shifts, learning progress provides an optimization-based view, and compression progress provides an algorithmic view. 
They provide an \textit{introspective} account, subjective to the system, of how much insight has been gained.
There is significant overlap, but no notion completely subsumes any other: Not all information gain is necessarily compression progress, and information gain (as opposed to compression progress) does not usually take the description length of the model itself into account. 
Learning progress can be defined for various loss functions---not just NLL---and also ignores model size.
For the theoretical analysis in this work, we choose compression progress, since it allows us to use the tools of Algorithmic Information Theory (AIT).
Despite their subtle differences, we expect that our results for compression progress largely transfer to other notions of interestingness.

An important drawback of all these criteria is that they assume that the computational effort has already been spent. 
In an open-ended setting, this assumption is untenable: the generation phase must decide where to spend effort before the outcome is known.
This means a useful interestingness criterion must be \textit{prospective}. 
It is not enough to know how much has been learned; we need to know how much we can still learn. 
The criteria above are largely post-hoc: they quantify the progress made after the computational effort of training has been expended. 
This makes them insufficient for the ``generation phase,'' where the system must select promising objects from a vast space of possibilities before committing significant resources to them.

In everyday speech, we often call a thing interesting because we have a feeling that it holds further secrets, because we think that there is more to learn about it. 
This is the sense of ``interesting'' we advocate for: \textit{a prospective heuristic that predicts future compression progress, based on past experience.} 
This leads to a fundamental question: to what extent and under which conditions is such a prediction even possible? 
The following sections provide a formal treatment of this inquiry, investigating the algorithmic conditions under which future progress can be rigorously predicted.

\subsection{Can We Learn How to Predict Compressibility?}

Before moving to the formal investigation, we consider whether the prediction of compressibility can be treated as just another learning problem. 
As described, information gain, learning progress, and compression progress measures require running the inference or learning algorithm to evaluate an object. 
This makes them not directly usable in an open-ended setting: it is not possible to simply ``train on everything'' to see what works. 
Doing so is not only computationally prohibitive but also risks model collapse, as mentioned before.
Instead, we want to \textit{predict} these measures before investing the effort to learn from them. 
The natural approach when trying to predict ``post-hoc'' data is \textit{learning} how to predict it.
This is indeed what happens when compression progress is used as an intrinsic reward for an RL agent, such as in a Controller-Model setup~\citep{learningtothink2015, kompella2017continual}.
Optimal Bayesian exploration in dynamic environments~\cite{sun2011planning} formalizes the expected information gain as a ``curiosity'' Q-value.
However, the cost of computing an optimal policy is exponential in the number of look-ahead steps, making it infeasible in many realistic settings---again, requiring a learnable approximation.

Meta-cognitive properties like interestingness and curiosity values can be difficult to learn for several reasons:
\textbf{Non-stationarity}---Interestingness is a moving target.
As the subject learns, an object that was once insightful becomes boring, requiring the meta-model to constantly adapt to the subject's changing state.
\textbf{Data Sparsity and Cost}---Ground-truth labels for progress are expensive to obtain, as they might require executing a full training phase just to evaluate a single data point or task.
\textbf{Credit Assignment}---In the many batches of data required for training, it is profoundly difficult to identify which specific samples contributed to the reduction in loss. 
This credit assignment problem makes it hard for a predictor to associate specific object features with the resulting learning progress.

Still, is it possible to predict compressibility?
Essentially, we are asking about the \textit{inductive property of interestingness}: 
can we infer how much there is yet to learn, and how easily accessible that learnable content is, based on the trajectory of how much we have already learned? 
Despite these challenges, the algorithmic structure of the data provides a stable ground---hence our turn to Algorithmic Information Theory.

\section{Runtime and Complexity Profiles}
\label{sec:profiles}

To analyze interestingness and its inductivity with sufficient generality, we move to an abstract setting using the tools of Algorithmic Information Theory. 
This section introduces the necessary concepts and results, more details can be found in \cite{LiVitanyi:90} and \cite{vereshchagin2016algorithmic}.
We consider all data and artifacts as binary strings $x \in \{0, 1\}^*$. 
Let $K(x)$ denote the standard prefix-free Kolmogorov complexity of $x$: 
the length of the shortest program $p$ on a universal prefix Turing machine that halts and outputs $x$. 
We are not merely interested in the absolute compression limit $K(x)$, but in the compression progress made as a function of computational effort. 
Let $K^r(x)$ be the time-bounded Kolmogorov complexity: the length of the shortest program that computes $x$ within $r$ steps. 
This bounded complexity allows us to formalize learning as a trajectory of algorithmic discovery, where increasing computational resources uncover progressively shorter descriptions of the data.

Our system has a history of experiences, already partially compressed into some form $B$. 
This representation $B$ is the current state of our model. 
Now the systems observes a new object $o$. 
A new object is introduced by concatenating it with $B$. 
By this concatenation, we are asking ``is $o$ interesting in the current context?'', instead of ``is $o$ interesting, in and of itself?''. 
This is exactly the relational property between model and object we discussed at beginning of Section~\ref{sec:criteria}.

Now three qualitatively different things can happen. 
First, $o$ could be algorithmically random relative to $B$---it carries no learnable regularities, and the compression of $Bo$ behaves as if $B$ alone were being further compressed, with $o$ contributing only irreducible noise. 
Second, $o$ could be itself compressible---it contains internal regularities that a short program can 
exploit, yielding immediate but local compression progress. 
Third, and most interestingly, $o$ and $B$ could be \textit{relationally} compressible: 
$o$ contains structure that unlocks new descriptions of $B$ itself, or vice versa---$B$ provides context that makes $o$ dramatically shorter to describe. 
This third case is precisely what we mean by $o$ being interesting in context. 
It could of course also be some mixture of these three cases. 
The point is, the interestingness of object $o$, in this framing, is measured by the expected compression progress of $Bo$. 
For our theoretical analysis, we simply define $x := Bo$. 
Since we want a \textit{prospective} measure---one that from past compression predicts future compressibility---we study not only the final compression $K(Bo)$, but the whole trajectory $r \mapsto K^r(Bo)$: 
how description length decreases as more computational effort is allowed. 
A drop in this profile at runtime $r$ represents a compression breakthrough achievable with effort $r$. 
Interestingness is then the question of whether this profile, observed up to a current runtime budget $R$, will continue to drop for larger $r > R$.

\paragraph{Complexity vs. Runtime Profiles} 
To study this discovery process, we use the complexity vs. runtime profile~\citep{vereshchagin2016algorithmic, bauwens2010computability, antunes2017sophistication}:
$$
D_x \coloneq \{(r, c) \mid K^r(x) \leq c\}.
$$
The boundary of this profile, $\min \{c \mid (r, c) \in D_x\} = K^r(x)$, tracks the shortest description of $x$ available given a runtime budget $r$. 
An object is \textit{logically deep}~\citep{Bennett:88} if this boundary continues to drop significantly even for very large $r$. 
Let $n_x$ denote the length of string $x$, and $k_x$ denote the complexity $K(x)$.
The fastest way to compute $x$ is the trivial program ``return $x$'', which has length $n_x$ plus a negligible constant.
Since $k_x$ is the complexity of the shortest program computing $x$, $k_x$ is the complexity value that the boundary of $D_x$ will eventually reach, given sufficient runtime (see Figure~\ref{fig:profile_diagrams}, top).
Our central question is about the inductive properties of the profile: 
Does the shape of $D_x$ for $r \leq R$ for some runtime cutoff $R$ allow us to predict the existence of further ``drops'' in complexity for $r > R$?

\paragraph{Log-Size vs. Complexity Profile and Sophistication}
\begin{wrapfigure}[43]{r}{0.41\textwidth}
\vspace{-7mm}
\begin{subfigure}[t]{0.41\textwidth}
\hspace{-4mm}
\includegraphics[width=1.1\textwidth]{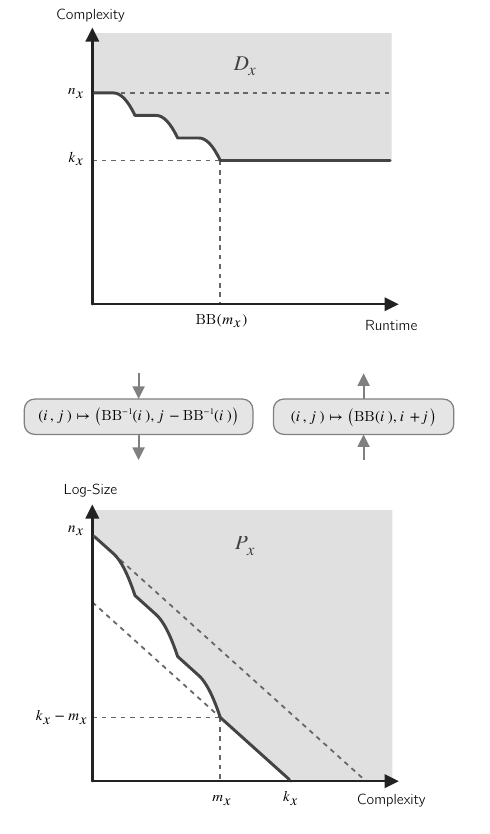}
\end{subfigure}
\vspace{-4mm}
\caption{
    Profiles and their relationship. 
    (Top) The complexity vs. runtime profile $D_x$ of a string $x$, starting with minimum complexity $n_x$ for fast programs and dropping to $k_x$ with sufficient runtime.
    The fastest runtime of a program with length $k_x$ that computes $x$ is $\text{BB}(m_x)$, where $\text{BB}$ is the busy-beaver function.
    (Bottom) The log-size vs. complexity profile $P_x$ of the same string.
    Both profiles are---up to logarithmic precision---affine transforms of each other, when $\text{}BB$ is used to convert complexity values into runtimes.
}
     \label{fig:profile_diagrams}
\end{wrapfigure}
To leverage results from algorithmic statistics, we also make use of a related profile: the log-size vs. complexity profile $P_x$:
$$
P_x \coloneq \{(i, j) \mid \exists A \text{ s.t. } x \in A, K(A) \leq i, \log \# A \leq j \},
$$
where each $A \subseteq \{0, 1\}^*$ is a finite set of binary strings. 
The boundary of $P_x$ represents, for a given set complexity, the log-size of the smallest set containing $x$.
From any set containing $x$, it is possible to construct a two-part description of $x$, consisting of the set itself (the model) and an identifier of $x$ within the set (the data-to-model code, also called residual).
One simple set containing $x$ is always the set consisting of all strings with length $n_x$.
The smallest set is the singleton set containing only $x$ itself, which means it has complexity $k_x$.
The boundary of the $P_x$ profile always has a negative slope of at most $-1$, reflecting the trivial exchange between halving the set size at the cost of adding one bit to the description length of the set.
A ``drop'' in $P_x$ occurs when the boundary falls below the line of slope $-1$. 
Such drops indicate that the corresponding model contains additional structural information about the string, beyond the random information of the index.
The point where the boundary last meets the $-1$ slope line before following it indefinitely defines the \textit{sophistication} of $x$, representing the complexity of all the structural, non-random information contained in $x$~\citep{koppel1987complexity, antunes2009sophistication}.
We refer to the complexity value of this point as $m_x$ (see Figure~\ref{fig:profile_diagrams}, bottom).

\paragraph{The $P_x \leftrightarrow D_x$ Correspondence}

The relationship between complexity vs. runtime and log-size vs. complexity profiles is bridged by the Busy Beaver function $\text{BB}(k)$---the maximum number of steps a halting $k$-bit program can run. 
$\text{BB}$ is uncomputable and extremely fast-growing.
But it yields a universal time-scale that abstracts away machine dependence. 
With this re-scaling, $P_x$ and $D_x$ are approximately affine transforms of each other.
The transform
\begin{equation}
\label{eq:profile_transform}
(i, j) \mapsto (\text{BB}(i), i + j)
\end{equation}
converts a log-size vs. complexity profile $P_x$ into a complexity vs. runtime profile $D_x$, within logarithmic precision $O(\log |x|)$ (see ~\citet{vereshchagin2016algorithmic}, Theorems 4 and 6). 
The inverse transform, from $D_x$ to $P_x$, is of course possible as well.
This correspondence is profound: 
it implies that every drop in $P_x$ (a new non-trivial model with a higher complexity) corresponds to a drop in $D_x$ (a progress in compression when the maximum runtime is increased). 
Figure~\ref{fig:profile_diagrams} illustrates the connection between the two profiles.

\subsection{How Profiles Relate to Current Machine Learning}
\label{sec:profiles_and_ml}

Both types of profiles, $D_x$ and $P_x$, can be used to analyze important machine learning methods. 
The boundaries of the profiles correspond to optimal limits, which most real-world setups will not necessarily find.
Still, we believe it is reasonable to assume that a well-calibrated system will stay close to the theoretical boundary, allowing the transfer of our theoretical analysis to machine learning practice.

\paragraph{Neural Networks \& Gradient Descent}
Assume we train a neural network with Stochastic Gradient Descent (SGD) on some data $x$ using an NLL objective. 
We can plot the trajectory of training in the $(i,j)$ space of the $P_x$ profile, with the complexity of the weights on the x-axis and the log-size of the set of inputs assigned non-negligible probability on the y-axis. 
At random initialization, the network computes a near-uniform function over inputs—high log-set-size. 
The weights have initially very low complexity: All that is needed in order to describe the weights is the random seed and the random number generator. 
The \textit{functional} complexity of this map is also low, since it can be described simply as ``approximately uniform.'' 
As SGD proceeds, the function becomes more specific: 
log-set-size decreases as the model assigns higher probability to training data, and model complexity increases as the learned map encodes more structure, and the weights become more refined. 
If the network merely memorizes the data, this progression follows the trivial slope of $-1$: 
each additional bit of weight complexity eliminates roughly half the plausible candidates, with no net compression gain. 
A genuine structural discovery---for example the network finding a regularity that explains many examples at once---corresponds to a drop below this slope: 
description length falls faster than one bit per bit of added complexity. 
This is exactly a drop in the $P_x$ profile, and in MDL terms, it is the moment the two-part code (weight complexity + NLL) decreases. 
In practice, SGD instantiates a single trajectory through this space, not necessarily the optimal boundary of $P_x$, but the qualitative structure---periods of slope-$-1$ memorization punctuated by drops corresponding to genuine discoveries---should be preserved.

The same story can be told in the $D_x$ framing, where the connection to effort is more direct. 
Here the y-axis is total description length---weight complexity plus NLL---and the x-axis is runtime, meaning the number of gradient steps taken. 
Each drop in this profile represents a point where additional compute yielded a genuine compression gain rather than mere redistribution of bits between model and identifier.

\paragraph{Chain-of-Thought and Recurrent Reasoning}
Chain-of-Thought (CoT) reasoning maps naturally onto the $D_x$ profile. 
Each reasoning step extends the computation used to produce the final answer; thus, reasoning steps are counted on the x-axis. 
The y-axis represents the description length of the answer given the reasoning trace so far:
how many bits are needed to specify the correct answer once the trace is fixed.
The model complexity does not change during CoT reasoning; it simply provides a constant offset.
A drop in this profile occurs where longer, more deliberate, more effortful reasoning leads to higher likelihood of the target.
The same framing applies to depth-wise recurrent architectures, where each recurrence step plays the role of a reasoning token: 
runtime depth is the x-axis, and a drop corresponds to a recurrence step that increases the likelihood of the correct output.

\paragraph{Model Scaling}
Model size offers a third axis along which effort can be scaled. 
When models of increasing parameter count are each trained for a fixed number of SGD steps, they trace a trajectory in the $P_x$ profile: 
parameter count---after sufficient training---is a proxy for the complexity of the model (x-axis), and the NLL on training data is the log-size of the set of plausible inputs (y-axis). 
A drop in this profile occurs when a larger model finds a description of the data that is \textit{shorter than the $-1$-slope extrapolation} from smaller models---that is, when increased capacity enables a genuine structural discovery rather than merely additional memorization. 

Alternatively, the amount of training compute provides a more direct correspondence to the $D_x$ profile: Training FLOPs are counted on the x-axis and total description length---weight complexity plus NLL---on the y-axis. 
Here, a drop occurs when additional compute yields genuine compression of the full system, not merely redistribution between model size and loss. 
This total description length can be approximated empirically by the area under the training loss curve, via prequential coding~\citep{blier2018description}: encoding each training example sequentially given the model at that step accumulates the per-step description lengths, approximating the MDL two-part cost.

\subsection{Counting Strings with Specific Profiles}
\label{sec:counting_strings}

Results from Algorithmic Statistics quantify how many strings exist that follow a specific profile shape.
Let $P$ be a valid log-size vs. complexity profile: 
an upward-closed set of integer pairs whose boundary has a slope of at most $-1$, or formally, for which $(a, b + c) \in P \implies (a + b, c) \in P \, \forall a, b, c$. 
For $P$, we define three characteristic values:
The log-size value of the leftmost boundary point $n_P \coloneq \min \{r \mid (0, r) \in P\}$, the complexity value of the rightmost boundary point $k_P \coloneq \min \{r \mid (r, 0) \in P\}$, and the complexity value of the point where the boundary meets the diagonal line leading to $(k_P, 0)$, namely $m_P \coloneq \min \{(r, k_P - r) \in P\}$.
These values correspond to the length $n_x$, complexity $k_x$, and sophistication $m_x$ of a string $x$ that has a profile $P_x$ of shape $P$.

\citet{vereshchagin2016algorithmic} (Theorem 19) lower bound the number of strings that have a profile close to $P$: 
\begin{equation}
\label{eq:min_per_profile}
    \# \{x \mid P_x \approx P\} \geq 2^{k_P - m_P + O(1)},
\end{equation}
where the closeness is of order $O(C(P) + \log n_P)$.
Theorem 20 of the same work proves the maximum number of strings following a profile close to $P$:
There are at most
\begin{equation}
\label{eq:max_per_profile}
    \# \{x \mid P_x \approx P\} \leq 2^{k_P - m_P(\epsilon) + 2 \epsilon + O(\log n_p)}
\end{equation} 
strings that are $\epsilon$ close to profile $P$. 
The value of $m_p(\epsilon)$ is similar to $m_p$. 

The first important observation is that for any given valid profile $P$, there exist strings which have a profile close to it.
The term $k_P - m_P$ represents the ``random'' part of a string following profile shape $P$---the bits that cannot be further compressed into a structural model. 
Crucially, the number of strings with a given profile is dominated by this residual randomness. 
This suggests that while structure is rare, strings sharing same structural profile are plentiful, differing only in their structureless noise. 
The value $n_P$, i.e. the log-length associated with profile $P$, contributes only logarithmically, compared to $k_P$ and $m_P$, to the upper bound of the number of strings.
The exact shape of the profile---beyond the values $n_P, k_P$ and $m_P$---only plays a role via its complexity $C(P)$ in the closeness to the lower bound of strings.

\section{Quantifying the Inductivity of Interestingness}
\label{sec:priors}

We have defined interestingness as the expected future compression progress when more effort is expended, and are interested in its inductive properties---namely to what extent past progress implies future progress.
The concepts introduced in the last section give us a way to quantify exactly this.
The proofs here are for log-size vs. complexity profiles, but the $P \leftrightarrow D$ correspondence (Equation~\ref{eq:profile_transform}) allows straightforward conversion of all results to complexity vs. runtime settings.

We formalize the inductivity of interestingness as follows:
If we observe a \textit{partial profile} $\hat{P}$ up to (but not including) a complexity cutoff $t$, what can we infer about its continuation? 
Given the $\hat{P}$, how likely is it that a string $x$ with this partial profile is further compressible?
How much compression progress can we expect?
When will the next compression progress (i.e., drop in the profile) happen?
What is the expected complexity at the last drop ($m_x$) and the expected singleton complexity ($k_x$)? 

To formulate these expectations, we must assume a prior probability distribution over all possible strings that we might encounter.
This prior represents the ``world'' our learning system inhabits.
Since there are infinitely many such computable strings, it is not possible to assign a uniform non-zero probability to all of them.
Instead we investigate three very general priors over strings: the Length Prior, the Algorithmic Prior and the Speed Prior.
As we will see, the inductivity properties of interestingness depend on the choice of prior.

Analogously to the values $n_P$, $k_P$, and $m_P$ for a full profile $P$, we define the values $\hat{n}$, $\hat{k}$, and $\hat{m}$ for the partial profile $\hat{P}$.
That means $\hat{n}$ is the minimal log size at minimal complexity, and the last observed drop lands at the point $(\hat{m}, \hat{k} - \hat{m})$.
By saying that we observe, or condition on, a partial profile $\hat{P}$, we mean that the full profile $P_x$ of a string $x$, sampled from a prior distribution, coincides precisely with $\hat{P}$, up to complexity $t$.
This means $n_x = \hat{n}$.
If there is no further drop at or after $t$, this also means that $m_x < t$, $m_x = \hat{m}$, and $k_x = \hat{k}$.
If there is a further drop, then $m_x \geq t > \hat{m}$ and $k_x < \hat{k}$.
Figure~\ref{fig:profile_diagrams_expected} illustrates this setup.
As it turns out, the property which---at least for Length and Algorithmic Priors---overwhelmingly predicts the probability of a future drop (i.e., $p(m_x \geq t)$) and of future compression progress (i.e., $\hat{k} - k_x$) is $t - \hat{m}$: the gap between the last observed complexity drop and the observation cutoff.
The larger this \textit{stagnation length}, the less likely further compression progress becomes\footnote{The relationship between $P_x$ and $D_x$ profiles, stagnation length and expected continuation can be explored in the interactive diagrams at \href{https://inductive-interestingness-2026.github.io/future-compression-progress/\#theory}{https://inductive-interestingness-2026.github.io/future-compression-progress/\#theory}}.

The bounds from Equations~\ref{eq:min_per_profile} and \ref{eq:max_per_profile} tell us how many strings exist for a given profile shape.
Given a priori probabilities of strings, this allows the calculation of expectations, which we will now do for the three mentioned prior distribution.

\begin{figure}[t]
\centering
\vspace{-2mm}
\includegraphics[width=0.6\textwidth]{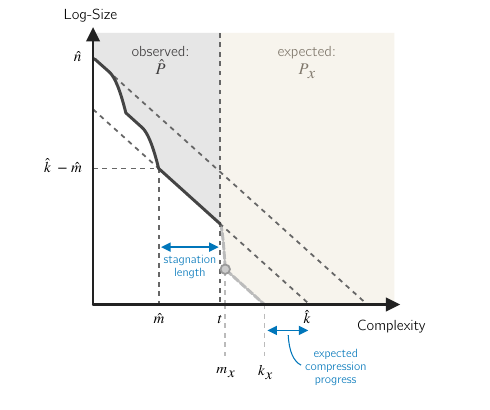}
\vspace{-4mm}
\caption{
The partial profile $\hat{P}$ is defined up to observation cutoff $t$. 
It has the characteristic values $\hat{n}$, $\hat{k}$, and $\hat{m}$.
For a string $x$ with the initial partial profile $\hat{P}$, the expected continuation $P_x$ has its last drop at complexity $m_x$, and a singleton complexity of $k_x$.
Theory and experiments show that the stagnation length $t - \hat{m}$ determines the expected compression progress $\hat{k} - k_x$.
}
\label{fig:profile_diagrams_expected}
\end{figure}

\subsection{Length Prior}
The simplest prior considers only string length, effectively asking for the probability that a monkey randomly typing bits generates $x$. 
To ensure this is a valid semi-measure, we use a prefix-free encoding (e.g., duplicating bits and ending with \texttt{01}).
With that, the length prior is defined as
\begin{equation}
\label{eq:length_prior}
    L(x) \coloneq 2^{-(2 |x| + 2)}.
\end{equation}

\begin{proposition}
\label{prop:length_prior}
Let $\hat{P}$ be an observed partial profile up to (but not including) complexity $t$, with the last observed drop at $(\hat{m}, \hat{k} - \hat{m})$. 
Assuming the object $x$ producing $\hat{P}$ is sampled from the Length Prior $L$, the following properties hold:
\begin{enumerate}[label=(\roman*), itemsep=0.5ex, topsep=1ex]
    \item Given $\hat{P}$, the probability of a further drop in the profile $P_x$ at any complexity $\geq t$ is:
    \[ p_L(m_x \geq t \mid \hat{P}) = 2^{-(t-\hat{m}) + O(\log \hat{n})}. \] \label{item:length_prob}
    
    \item Given the existence of such a drop and assuming $\hat{k} - t \gg 0$, the expected complexity of $x$ is:
    \[ \mathbb{E}_{x \sim L}[k_x \mid \hat{P}, m \geq t] \approx \hat{k} - 2. \] \label{item:length_exp_m}
    
    \item Without conditioning on a further drop, the expected further compression progress is:
    \[ \hat{k} - \mathbb{E}_{x \sim L}[k_x \mid \hat{P}] = 2^{-(t-\hat{m}) + O(\log \hat{n})}. \] \label{item:length_progress}
\end{enumerate}
\end{proposition}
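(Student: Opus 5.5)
The plan is to reduce all three claims to ratios of string counts. Under the Length Prior every string consistent with $\hat{P}$ has the same length $n_x = \hat{n}$, hence the same prior weight $2^{-(2\hat{n}+2)}$. Conditioning on $\hat{P}$ is therefore uniform conditioning on the set $S(\hat{P}) = \{x : P_x \text{ agrees with } \hat{P} \text{ below } t\}$. Any probability or expectation given $\hat{P}$ becomes a ratio of cardinalities of subsets of $S(\hat{P})$, and these can be estimated with Equations~\ref{eq:min_per_profile} and~\ref{eq:max_per_profile}.

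\textbf{Partitioning $S(\hat{P})$.} I will split $S(\hat{P})$ according to the full profile $P$ that extends $\hat{P}$.

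The first part consists of strings with no further drop. Their profile is the unique completion of $\hat{P}$ that follows the slope $-1$ line from $(\hat{m}, \hat{k}-\hat{m})$. It has $m_P = \hat{m}$ and $k_P = \hat{k}$, so it contains about $2^{\hat{k}-\hat{m}}$ strings.

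The second part consists of strings with a further drop. Every such completion has $m_P \geq t$ and $k_P < \hat{k}$. Its boundary stays at or below the slope $-1$ line through $(\hat{m}, \hat{k}-\hat{m})$, which gives $k_P - m_P \leq \hat{k} - t$. By Equation~\ref{eq:max_per_profile}, each such profile contributes at most $2^{k_P - m_P + O(\log \hat{n})}$ strings.

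The number of relevant completions has to be controlled. Coarsening them by the pair $(m_P, k_P)$ leaves only polynomially many classes in $\hat{n}$, which is absorbed into $O(\log \hat{n})$. The lower bound comes from exhibiting one simple completion, with a drop of one bit placed at $t$, and applying Equation~\ref{eq:min_per_profile} to it. Its complexity $C(P)$ is $O(\log \hat{n})$ beyond that of $\hat{P}$, and that overhead is absorbed as well. Together these give a drop count of $2^{\hat{k}-t+O(\log \hat{n})}$. Dividing by the total count $2^{\hat{k}-\hat{m}}(1+o(1))$ yields (i).

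\textbf{Part (ii).} I will weight the completions by the drop size $d = \hat{k}-k_P$. A completion with drop size $d$ has at most $2^{\hat{k}-t-d+O(1)}$ strings, because its residual randomness $k_P - m_P$ shrinks by $d$. Conditioned on a drop, the distribution of $d$ is therefore roughly geometric with ratio $1/2$, so $\mathbb{E}[d] \approx \sum_{d \geq 1} d\, 2^{-d} = 2$. This gives $\mathbb{E}[k_x] \approx \hat{k}-2$. The assumption $\hat{k}-t \gg 0$ is what lets the geometric tail run without truncation.

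\textbf{Part (iii).} The expected progress is $\mathbb{E}[\hat{k}-k_x] = p(\text{drop}) \cdot \mathbb{E}[d \mid \text{drop}]$. By (i) and (ii) this is $2^{-(t-\hat{m})+O(\log \hat{n})} \cdot O(1)$, and the constant is absorbed into the exponent.

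\textbf{Main obstacle.} The hardest step is the geometric counting in (ii). The bounds in Equations~\ref{eq:min_per_profile} and~\ref{eq:max_per_profile} carry $O(\log \hat{n})$ slack, and that slack swamps a constant like $2$. Obtaining ``$\approx 2$'' requires the per-$d$ counts to be tight up to $O(1)$ relative to each other, not merely up to logarithmic factors.

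My first attempt would be a direct construction. I would fix the model set $A$ witnessing the last drop and count indices whose residual admits an extra $d$-bit structural description. That count is $2^{\hat{k}-t-d}$, and comparing the classes for different $d$ inside a common parametrization lets the logarithmic errors cancel. Failing that, I would state (ii) heuristically, taking leading exponents only, which is what the ``$\approx$'' permits.
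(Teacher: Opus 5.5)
Your proposal is correct at the paper's level of rigor and follows essentially its route. You split the strings consistent with $\hat{P}$ into a no-further-drop class of size $2^{\hat{k}-\hat{m}+O(\log\hat{n})}$ and a further-drop class summed over last-drop positions $(m,k)$ with weights $2^{k-m}$, then take count ratios for (i), read off the geometric $2^{-d}$ law of the drop size for (ii), and multiply for (iii). The obstacle you flag in (ii) is handled in the paper exactly by your fallback: it pulls a single common factor $g_{\text{drop}}=2^{O(\log\hat{n})}$ out of every $(m,k)$ term so that it cancels in $K_L/N_{\text{drop}}$, which is an assumed uniformity of the slack rather than something derived from the counting bounds.
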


The proof can be found in Appendix~\ref{app:proof_length}.
We can see that the stagnation length $t - \hat{m}$ determines both the likelihood of future compression and the magnitude of expected progress.

\subsection{Algorithmic Prior}

We can explain the Algorithmic or Solomonoff prior~\citep{solomonoff1964formal} in similar terms as the length prior.
But instead of the monkey typing the string directly, we now ask for the probability of the monkey typing a program that, when run on a prefix Turing machine $U$, outputs $x$.
The Algorithmic prior is 
\begin{equation}
\label{eq:algorithmic_prior}
    M(x) \coloneq \sum_{p: U(p)=x} 2^{-|p|}.
\end{equation}

This prior prefers algorithmically simple strings (ones with a short description length) as opposed to the Length Prior, which prefers short strings.

\begin{proposition}
\label{prop:algorithmic_prior}
Assuming the object $x$ producing $\hat{P}$ is sampled from the Algorithmic Prior $M$ rather than the Length Prior, the following properties hold:
\begin{enumerate}[label=(\roman*), itemsep=0.5ex, topsep=1ex]
    \item The probability of a further drop at or after complexity $t$ is:
    \[ p_M(m_x \geq t \mid \hat{P}) = \frac{1}{1 +  \frac{2^{-\hat{m} + O(\log \hat{n})}}{2^{-t + O(\log \hat{n})} + 2 ^ {-\hat{k} + O(\log \hat{n})}}}, \]
    which, for $\hat{k} - t \gg 0$, simplifies to:
    \[ p_M(m_x \geq t \mid \hat{P}) \approx 2^{-(t - \hat{m}) + O(\log \hat{n})}. \] \label{item:algo_prob}

    \item Given the existence of such a drop, the expected complexity of x is:
    \[ \mathbb{E}_{x \sim M}[k_x \mid \hat{P}, m \geq t] \approx \frac{\hat{k} + t}{2}. \] \label{item:algo_exp_m}

    \item Without conditioning on the existence of a further drop, the expected further compression progress is:
    \[ \hat{k} - \mathbb{E}_{x \sim M}[k_x \mid \hat{P}] \approx 2^{-(t - \hat{m}) + O(\log \hat{n})}. \] \label{item:algo_progress}
\end{enumerate}
\end{proposition}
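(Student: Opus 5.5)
The plan is to mirror the Length Prior argument, but to replace the string-length weight by the algorithmic weight. The key fact is that, up to the coding theorem, $M(x) = 2^{-K(x) + O(1)} = 2^{-k_x + O(1)}$. Every string whose profile continues $\hat{P}$ therefore carries prior mass depending only on its eventual singleton complexity $k_x$. All candidate strings share the same length $\hat{n}$, so the $O(\log n_P)$ slack in Equations~\ref{eq:min_per_profile} and~\ref{eq:max_per_profile} becomes $O(\log \hat{n})$. The posterior is then a ratio of sums of the form $\sum_{P} \#\{x \mid P_x \approx P\}\, 2^{-k_P}$ over the full profiles $P$ that extend $\hat{P}$. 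I would carry the number of admissible profile shapes as a polynomial factor in $\hat{n}$, absorbed into the $O(\log \hat{n})$ exponent, just as in the Length Prior proof in Appendix~\ref{app:proof_length}.

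\textbf{Step 1 (no further drop).} If there is no drop at or after $t$, then $m_x = \hat{m}$ and $k_x = \hat{k}$, and the profile is fully determined by $\hat{P}$. By the two counting bounds, the number of such strings is $2^{\hat{k}-\hat{m}+O(\log\hat n)}$, and each has weight $2^{-\hat{k}}$. The total mass is therefore $2^{-\hat{m}+O(\log \hat n)}$.

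\textbf{Step 2 (a further drop).} For a continuation, the last drop lands at some $m \ge t$ with final complexity $k$, where $k$ ranges between roughly $m$ and $\hat{k}$. There are $2^{k-m+O(\log\hat n)}$ strings for each such $(m,k)$, each of weight $2^{-k}$, so each pair contributes $2^{-m+O(\log\hat n)}$. Summing the geometric series over $m \ge t$ gives a mass of order $2^{-t}$. I would keep a separate boundary term $2^{-\hat{k}}$ for the degenerate continuations where $m$ is close to $\hat{k}$. Part (i) then follows by writing $p_M = \frac{A}{A+B}$ with $A$ the drop mass and $B = 2^{-\hat m}$ the no-drop mass, and dividing through by $A$. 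When $\hat{k}-t \gg 0$, the denominator term $2^{-t}$ dominates $2^{-\hat k}$, which gives the simplification $2^{-(t-\hat m)}$.

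\textbf{Step 3 (expected complexity given a drop).} This is the most delicate step. Conditioned on a drop, the weights are approximately $2^{-m}$ for each $(m,k)$, and for fixed $m$ they are roughly uniform over $k \in [m,\hat{k}]$. The mass concentrates at $m \approx t$, because it decays geometrically in $m$. At $m \approx t$ the conditional law of $k$ is then approximately uniform on $[t,\hat{k}]$, which gives $\mathbb{E}[k_x \mid \hat{P}, m\ge t] \approx (\hat{k}+t)/2$.

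The obstacle is justifying this near-uniformity, because the $O(\log \hat n)$ errors in the counting bounds could in principle tilt the distribution. I would first try to argue that these errors are uniform in $k$: they depend only on $\hat{n}$ and on the complexity of the profile shape $C(P)$, and the latter varies by only $O(\log\hat n)$ across the shapes. Under that argument they cancel in the ratio, so the statement holds with ``$\approx$'' precision. The contrast with the Length Prior is instructive. There, each extra compression bit doubles the count of strings but leaves the weight unchanged, so the distribution piles up at $k$ close to $\hat{k}$, giving $\hat{k}-2$. Here the weight $2^{-k}$ exactly cancels the $2^{k}$ count, which is what produces the flat distribution.

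\textbf{Step 4 (part (iii)).} The expected progress is $\hat{k}-\mathbb{E}[k_x] = p_M(m_x\ge t\mid\hat{P})\cdot(\hat{k}-\mathbb{E}[k_x\mid \text{drop}])$. Using Step 3, the second factor is $(\hat{k}-t)/2$. Since $\hat{k} \le \hat{n} + O(\log \hat n)$, this factor is absorbed as $2^{O(\log\hat n)}$. Combining it with part (i) gives $2^{-(t-\hat m)+O(\log\hat n)}$.
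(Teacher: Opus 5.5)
Your proposal follows essentially the same route as the paper. Both arguments split the posterior into a no-drop mass $g_{\text{stop}}2^{-\hat m}$ and a drop mass given by the double sum $\sum_{m=t}^{\hat k-1}\sum_{k=m}^{\hat k-1}2^{-m} = (\hat k-t-1)2^{1-t}+2^{1-\hat k}$, and both use the same factorization for (iii). The linear factor $(\hat k-t-1)$, which your ``order $2^{-t}$'' absorbs into $O(\log\hat n)$, is harmless for this proposition but is what drives the later comparison with the Length Prior. The only real difference is in (ii): the paper evaluates $K_M$ and $Z^M_{\text{drop}}$ in closed form, whereas you use a concentration-plus-uniformity heuristic. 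On your Step~3 worry, the paper does not prove that the $2^{O(\log \hat n)}$ error factors cancel either; it simply pulls them out as one common constant $g_{\text{drop}}$. So that uniformity is an assumption at the paper's level of rigor, and you should state it as one, since errors varying by $O(\log\hat n)$ in the exponent would not by themselves cancel a polynomial tilt in the conditional law of $k$.
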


For the proof, please see Appendix~\ref{app:proof_algorithmic}. 
For the Algorithmic Prior, further compressibility is equally determined exponentially by $t - \hat{m}$.
However, we can show that the Algorithmic Prior yields higher expected compression than the Length Prior.

\begin{proposition}
\label{prop:comparison}
Assuming $\hat{k} - t \gg 0$ and $t - \hat{m} \gg \log \hat{n}$, the Algorithmic Prior $M$ yields significantly higher expectations for future insight than the Length Prior $L$. Specifically, the following ratios hold:
\begin{enumerate}[label=(\roman*), itemsep=0.5ex, topsep=1ex]
    \item The ratio between the further drop probabilities is approximately linear in the remaining complexity gap $\hat{k} - t$:
    \[ \frac{p_M(m_x \geq t \mid \hat{P})}{p_L(m_x \geq t \mid \hat{P})} \approx \hat{k} - t - 1. \] \label{item:comp_prob_ratio}

    \item The ratio between the expected further compression progress is approximately quadratic in the remaining complexity gap:
    \[ \frac{\hat{k}-\mathbb{E}_{x \sim M}[k_x \mid \hat{P}]}{\hat{k}-\mathbb{E}_{x \sim L}[k_x \mid \hat{P}]} \approx \frac{1}{4} \left( \hat{k} - t - 1 \right) \left( \hat{k} - t \right). \] \label{item:comp_progress_ratio}
\end{enumerate}
\end{proposition}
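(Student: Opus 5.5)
The plan is to derive both ratios from the explicit sums behind Propositions~\ref{prop:length_prior} and~\ref{prop:algorithmic_prior}, not from their $O(\log \hat{n})$ summaries. The reason is that the summaries hide exactly the polynomial factors we want to compare.

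\textbf{Setting up the sums.} Fix the partial profile $\hat{P}$. Every admissible continuation is either the ``no further drop'' continuation, or one with last drop at complexity $m\ge t$ and singleton complexity $k$, where $m< k\le \hat{k}-1$. By Equations~\ref{eq:min_per_profile} and~\ref{eq:max_per_profile}, the number of strings realizing a continuation with parameters $(m,k)$ is $2^{k-m}$, up to a factor $2^{O(\log \hat{n})}$. I will treat this factor as the same multiplicative constant $c$ for every continuation. It multiplies both numerator and denominator in each conditional probability, so it cancels in the ratios. This is also where the hypothesis $t-\hat{m}\gg \log \hat{n}$ enters: it keeps the slack from dominating the exponential terms.

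\textbf{Length Prior.} All strings consistent with $\hat{P}$ have length $\hat{n}$, so they carry equal weight $L(x)$. The drop mass is
\[
\sum_{m\ge t}\sum_{k<\hat{k}} 2^{k-m}.
\]
For fixed $m$ the inner sum is dominated by $k=\hat{k}-1$, and the outer geometric sum by $m=t$, giving roughly $2^{\hat{k}-t}$. The no-drop mass is $2^{\hat{k}-\hat{m}}$, so $p_L\approx 2^{-(t-\hat{m})}$. Because the weight concentrates geometrically near $k=\hat{k}-1$, with a tail giving mean $\hat{k}-2$, Proposition~\ref{prop:length_prior}(ii) holds.

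\textbf{Algorithmic Prior.} Each string now carries weight $M(x)\approx 2^{-k}$, so a $(m,k)$ continuation contributes $2^{k-m}\cdot 2^{-k}=2^{-m}$, independent of $k$. Summing over the $\hat{k}-m-1$ admissible values of $k$ gives $(\hat{k}-m-1)2^{-m}$. The sum over $m\ge t$ is dominated by $m=t$ when $\hat{k}-t\gg 0$, yielding about $(\hat{k}-t-1)2^{-t}$, against a no-drop mass of $2^{-\hat{m}}$. Dividing by the Length Prior expression, the $2^{-(t-\hat{m})}$ factors cancel and part (i) follows:
\[
\frac{p_M(m_x\ge t\mid\hat{P})}{p_L(m_x\ge t\mid\hat{P})}\approx \hat{k}-t-1.
\]
I will need to check that the geometric tail over $m>t$ contributes comparable constant factors to both priors; if it does not, the "$\approx$" absorbs a small constant.

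\textbf{Part (ii).} Write the expected progress as
\[
\hat{k}-\mathbb{E}[k_x\mid\hat{P}]=p(m_x\ge t\mid\hat{P})\,\bigl(\hat{k}-\mathbb{E}[k_x\mid\hat{P},m_x\ge t]\bigr),
\]
which holds because the no-drop branch contributes zero progress. Under $L$ the second factor is $2$ by Proposition~\ref{prop:length_prior}(ii). Under $M$ it is $\hat{k}-\frac{\hat{k}+t}{2}=\frac{\hat{k}-t}{2}$ by Proposition~\ref{prop:algorithmic_prior}(ii), since the weight is uniform in $k$. Multiplying the ratio from (i) by $\frac{(\hat{k}-t)/2}{2}$ gives $\frac14(\hat{k}-t-1)(\hat{k}-t)$.

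\textbf{Main obstacle.} The hard step is justifying that the $O(\log \hat{n})$ slack factors coming from Equations~\ref{eq:min_per_profile} and~\ref{eq:max_per_profile} are identical across priors and across continuations, so that they cancel. As stated, these bounds only pin each count down to within $2^{O(\log \hat{n})}$, which would swamp a linear factor $\hat{k}-t$. I would first try to argue that the same counting estimate is used for both priors, so that any systematic distortion $c(m,k)$ appears identically in numerator and denominator. Failing that, I would present the result explicitly as holding under the idealized counts $2^{k-m}$, which is what the "$\approx$" in the statement signals.
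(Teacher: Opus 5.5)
Your proposal follows the paper's proof essentially step for step. You compute explicit drop and stop masses under each prior, with the $2^{O(\log \hat{n})}$ factors carried as shared constants that cancel between priors; these are the paper's $g_\text{drop}$ and $g_\text{stop}$, which is exactly the first resolution you suggest for your ``main obstacle.'' The geometric tail over $m>t$ roughly doubles both sums, so it cancels as you anticipated. For (ii) you use the same decomposition $p(m_x\ge t\mid\hat P)\,\bigl(\hat k-\mathbb{E}[k_x\mid\hat P,m_x\ge t]\bigr)$ with conditional means $\hat k-2$ and $(\hat k+t)/2$.

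One correction concerns the role of the hypothesis. The paper's exact ratio is $1+(\hat k-t-2)\big/\bigl(1+\frac{g_\text{drop}}{g_\text{stop}}(\hat k-t-1)2^{-(t-\hat m-1)}\bigr)$, and $t-\hat m\gg\log\hat n$ is what makes the correction term in the denominator negligible. Equivalently, it licenses your unstated linearization $p_M\approx Z^M_\text{drop}/Z^M_\text{stop}\approx(\hat k-t-1)2^{-(t-\hat m)}$. You need that step even with a single shared constant $c$, so make it explicit: otherwise $p_M$ saturates near $1$ and the ratio collapses to roughly $1/p_L$ rather than $\hat k-t-1$.
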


The proof can be found in Appendix~\ref{app:proof_comparison}.
These ratios demonstrate that the inductive property of interestingness---the capacity for past compression progress to signal future discovery---is significantly more robust under the assumption of the Algorithmic Prior.

While our primary analysis considers the indefinite future, we show in Appendix~\ref{sec:probability_of_drop_within_d_length} and \ref{sec:probability_of_drop_within_d_algo} that the probability of discovery is heavily concentrated near the current observation threshold $t$. 
For any finite computational window, the likelihood of a ``breakthrough'' vanishes exponentially as the gap since the last progress increases, providing a justification for prioritizing tasks with the most recent ``aha!'' moments.

\subsection{Speed Prior}
The algorithmic prior $M$ is uncomputable, and it is impossible to obtain exact samples from it. 
This stems from the fact that the programs generated by a random source may never halt---a property that is generally undecidable.
The Speed Prior $S(x)$~\citep{schmidhuber2002speed} addresses this by penalizing both program length and runtime. 
Intuitively, for the Speed Prior, the monkey has to first spell out the maximum runtime in a prefix-free format, and then the program computing $x$. 
If the program has not halted before the specified runtime, the output is set to the empty string.
We write $p \rightarrow_i x$ iff it takes $2^{i - |p|}$ steps for program $p$ to compute output $x$.
The speed prior is defined as
\begin{equation}
\label{eq:speed_prior}
    S(x) \coloneq \sum_{i=1}^{\infty} \sum_{p \rightarrow_i x} 2^{-(i + |p|)}.
\end{equation}
It is closely related to Levin Complexity $Kt(x) = \min \{|p| + \log \text{time}(p) : U(p) = x\}$~\citep{levin1984randomness}.

\begin{proposition}
\label{prop:speed_prior}
Assuming the object producing $\hat{P}$ is sampled from the Speed Prior $S$, the following properties hold:
\begin{enumerate}[label=(\roman*), itemsep=0.5ex, topsep=1ex]
    \item The probability of a further drop at or after complexity $t$ is:
    \[ p_S(m_x \geq t \mid \hat{P}) \approx 0. \] \label{item:speed_prob}

    \item Given the existence of such a drop, the expected final complexity is:
    \[ \mathbb{E}_{x \sim S}[k_x \mid \hat{P}, m \geq t] \approx \frac{\hat{k} + t - 1}{2}. \] \label{item:speed_exp_m}

    \item Without conditioning on the existence of a further drop, the expected further compression progress is:
    \[ \hat{k} - \mathbb{E}_{x \sim S}[k_x \mid \hat{P}] \approx 0. \] \label{item:speed_progress}
\end{enumerate}
\end{proposition}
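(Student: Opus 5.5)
The plan follows the template of the proofs for the Length and Algorithmic Priors. We count strings with each possible continuation of $\hat{P}$ using Equations~\ref{eq:min_per_profile} and \ref{eq:max_per_profile}, weight each class by its prior mass, and normalize. What is new is the prior weight. A string $x$ with profile $P_x$ gets its dominant Speed-Prior mass from a program minimizing $|p| + \log \text{time}(p)$, i.e.\ from its Levin complexity $Kt(x)$. Through the $P \leftrightarrow D$ correspondence (Equation~\ref{eq:profile_transform}), a program of length $i+j$ that uses a model of complexity $i$ runs for roughly $\text{BB}(i)$ steps. So, up to $O(\log \hat{n})$ terms, we will write
\[ S(x) \approx \max_{(i,j)\ \text{on the boundary of}\ P_x} 2^{-(i+j) - \log \text{BB}(i)}, \]
with the trivial point $(0,\hat{n})$ always available and contributing about $2^{-\hat{n}}$ (the literal program runs in time linear in $\hat{n}$).

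First, we treat strings whose profile continues with no further drop, so $m_x = \hat{m}$ and $k_x = \hat{k}$. Their mass is about $2^{\hat{k}-\hat{m}} \cdot \max\bigl(2^{-\hat{n}},\, 2^{-\hat{k} - \log \text{BB}(\hat{m})}\bigr)$, where the count comes from Equation~\ref{eq:min_per_profile}. Second, we treat strings with a further drop at complexity $m \geq t$ and final complexity $k < \hat{k}$. These are counted by Equations~\ref{eq:min_per_profile} and \ref{eq:max_per_profile} as about $2^{k-m}$ strings per pair $(m,k)$. The boundary points reachable up to $t$ are identical to those of the first class. The only new short descriptions sit at complexity $\geq t$ and cost a factor $2^{-\log \text{BB}(m)} \leq 2^{-\log\text{BB}(t)}$ in runtime. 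Summing over $m \geq t$ and $k$, the second class therefore has total mass at most about $\sum_{m,k} 2^{k-m} \cdot \max\bigl(2^{-\hat{n}},\, 2^{-k-\log\text{BB}(m)}\bigr)$.

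The key comparison is this. In both classes the mass is dominated by the trivial or early-model descriptions, and those are shared. So the ratio of the second class to the first reduces to comparing counts against penalties: the new structure saves at most $\hat{k}-k$ bits of length, but pays $\log \text{BB}(m) - \log\text{BB}(\hat{m})$ bits of runtime. Since $\text{BB}$ eventually dominates every computable function, $\log\text{BB}(t) - \log \text{BB}(\hat{m})$ exceeds every computable function of $t-\hat{m}$. The second class also has at most about $2^{\hat{k}-t}$ strings per $k$, compared with $2^{\hat{k}-\hat{m}}$ in the first. Together these make the posterior probability of a further drop vanish, which is (i), written as $\approx 0$. Item (iii) then follows because $\hat{k} - k_x \leq \hat{k}$ is bounded, so the expectation is bounded by $\hat{k}$ times a vanishing probability.

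For (ii), we condition on the second class and compute $\mathbb{E}[k]$ under weights $w(m,k) \propto 2^{k-m}\cdot 2^{-k - \log \text{BB}(m)}$. The factor $2^{-\log\text{BB}(m)}$ concentrates $m$ at $t$, its minimum. Given $m=t$, the $k$-dependence cancels between count and length penalty, so $k$ is roughly uniform over the admissible range $t \leq k \leq \hat{k}-1$ (the drop must strictly lower $k$). The mean is then $(\hat{k}+t-1)/2$. The main obstacle is making the runtime bookkeeping rigorous. We must show that the Speed Prior mass of a string with a given profile really is governed, within $O(\log \hat{n})$ in the exponent, by $\min (i+j+\log\text{BB}(i))$ over boundary points; the correspondence of \citet{vereshchagin2016algorithmic} only holds up to logarithmic slack in complexity and not in runtime. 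I would first handle this by bounding runtime from above by $\text{BB}(i+O(\log n))$ and from below by $\text{BB}(i-O(\log n))$. I would then argue that the uniform-in-$k$ conclusion in (ii) survives because the distortion depends only on $m$ and not on $k$, and therefore cancels in the normalization.
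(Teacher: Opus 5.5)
\textbf{Verdict: there is a genuine gap.} It lies in how you assign Speed-Prior weight to strings, and it shows up as an inconsistency between your argument for (i) and your computation for (ii).

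\textbf{The gap.} You model $S(x)$ as a maximum over all boundary points of $P_x$, including $(0,\hat n)$ and the points of $\hat P$. You then assert that in both classes this maximum is attained by the shared descriptions. If that is true, the runtime penalty $\log\text{BB}(m)-\log\text{BB}(\hat m)$ never enters the calculation. Every string consistent with $\hat P$ gets essentially the same weight. Note that a drop-class string still has $(\hat m,\hat k-\hat m)$ in its profile, and hence a description of weight $2^{-\hat k-\log\text{BB}(\hat m)}$; your second-class formula omits this.

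The posterior then collapses to the count ratio $N_{\text{drop}}/(N_{\text{drop}}+N_{\text{stop}})=2^{-(t-\hat m)+O(\log\hat n)}$. That is the Length-Prior answer, and it does not vanish when $t-\hat m$ is small. The same model breaks the other two items:
\begin{itemize}
\item For (ii), the conditional weights given a drop would be proportional to $2^{k-m}$. This concentrates $k$ near $\hat k-1$ and gives $\mathbb{E}[k\mid\hat P,m\ge t]\approx\hat k-2$, not $(\hat k+t-1)/2$.
\item For (iii), your bound is $\hat k$ times a probability of order $2^{-(t-\hat m)+O(\log\hat n)}$. This is not negligible unless $t-\hat m\gg\log\hat n$.
\end{itemize}
You cannot let the shared descriptions dominate the mass and then compare the subdominant new descriptions against them.

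\textbf{What the paper does instead.} It uses one fixed convention throughout: a string whose last drop sits at $(m,k-m)$ receives weight $2^{-k-\log\text{BB}(m)}$. Only its shortest description, run for $\text{BB}(m)$ steps, is counted; trivial and intermediate descriptions are ignored. Then
\begin{itemize}
\item $Z^S_{\text{stop}}=g_{\text{stop}}2^{-\hat m-\log\text{BB}(\hat m)}$, and
\item $Z^S_{\text{drop}}=\sum_{m\ge t}(\hat k-m)2^{-m-\log\text{BB}(m)+O(\log\hat n)}\approx g_{\text{drop}}(\hat k-t)2^{-t-\log\text{BB}(t)}$.
\end{itemize}
From these, (i) follows from the $\text{BB}$ gap between $\hat m$ and $t$, even for $t=\hat m+1$. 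Item (ii) is exactly your uniform-in-$k$ computation at $m=t$. Item (iii) holds because the $\text{BB}$ suppression swamps the factor $\hat k$.

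\textbf{What you need to change.} Your weights in (ii) already follow this convention. To obtain the proposition, use it in (i) as well and drop the max-over-boundary-points model; followed consistently, that model gives the Length-Prior values instead.

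Separately, your planned $\text{BB}(i\pm O(\log n))$ bookkeeping would separate $\text{BB}(\hat m)$ from $\text{BB}(t)$ only when $t-\hat m$ exceeds the logarithmic slack. The paper avoids this by taking the runtime to be exactly $\text{BB}(m)$.
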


The proof can be found in Appendix~\ref{app:proof_speed}.
The Speed Prior is the most ``conservative''. 
Because it penalizes runtime, it assumes that if a faster way to compress the string existed, it would have been found already. 
This prior essentially treats the current frontier of compression as the final one.
It should be mentioned that this, at least to a certain extent, is an artifact of the extreme runtimes of the $\text{BB}$ function.
In practice, more realistic runtimes lead to a milder decrease of expected compression progress.

\subsection{Discussion: The Inductive Property of Interestingness}

\begin{figure}[t]
\begin{subfigure}[t]{0.49\textwidth}
\centering
\includegraphics[width=1.\textwidth]{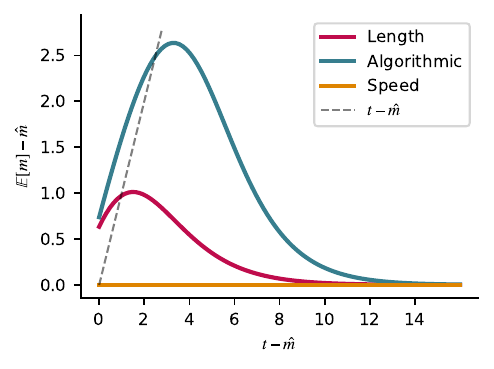}
\end{subfigure}
\begin{subfigure}[t]{0.49\textwidth}
\centering
\includegraphics[width=1.\textwidth]{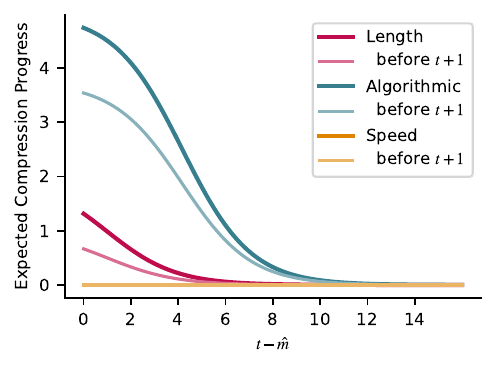}
\end{subfigure}
\caption{The expected difference between the complexity of the last drop $m$ and the last observed drop $\hat{m}$ (left), and the expected future compression progress (right). 
Both are plotted as a function of the stagnation length $t - \hat{m}$, for the Length, Algorithmic, and Speed Prior.
The right plot also shows the expected progress immediately after the observation cutoff $t$. As we can see, this is where the majority of the expected progress should happen.}
    \label{fig:expected_m}
\end{figure}

With Equation~\ref{eq:profile_transform}, it is possible to transform the above results from $P_x$ profiles to complexity vs. runtime profiles $D_x$, since the strings---and hence also the number of strings---with a certain $P$-profile shape are the same the ones with the corresponding $D$-profile shape.
Thus, we see that under standard priors, the ``Inductive Property of Interestingness'' holds a specific form: 
past compression progress can be an indicator of future progress, but only if that progress is recent relative to the computational effort expended. 
As shown in Figure~\ref{fig:expected_m}, if the stagnation length $t - \hat{m}$ is small, there is an expectation for future drops, at least for the Length and Algorithmic Prior. 
This may justify investing resources into objects that have recently yielded insight.
Conversely, after a long stagnation, the probability of a future ``aha!'' moment vanishes exponentially.
Note, however, that this does not mean that it is impossible: as mentioned in Section~\ref{sec:counting_strings}, all valid profile shapes are closely followed by some strings.
They are just rare under the priors we investigated.
But in general, all else being equal, the most promising object for further engagement is the one which exhibited the most recent progress.
In our theoretical analysis, the size (as long as it is more than logarithmic in $n_x$) and the number of observed drops play only negligible roles compared to the recency of the last drop.
This stands in contrast to empirical methods which often take the magnitude of the most recent progress as primary indicator~\citep{baranes2013active, kanitscheider2021multi, gaven2025magellan}.

A caveat of the $P_x \leftrightarrow D_x$ correspondence is its use of the Busy Beaver function. 
Because $\text{BB}$ grows faster than any computable function, the runtimes discussed here are for the most part physically unrealizable. 
However, $\text{BB}$ serves a vital theoretical role: 
it abstracts away machine dependence, ensuring these results reflect the \textit{intrinsic} algorithmic nature of the objects rather than the specifics of a reference computer. 
This machine independence is important, since it is difficult to justify singling out any particular computer as the ``most basic'' or ``most universal''~\citep{muller2010stationary}.

Another potential concern regarding our treatment of interestingness is the apparent disregard for \textit{content}. 
Perhaps a system should not predict future progress based solely on the shape of a complexity profile, but rather on the semantic nature of the object itself. 
From this perspective, an agent can recognize a task as interesting because it identifies familiar motifs---physics-like patterns, linguistic structures, or causal hierarchies---which tend to yield to further analysis.
We argue, however, that accounting for the content is analogous to choosing what kind of prior to use. 
To an agent with no domain knowledge, the curve is the only universal signal available. 
But as a system matures, it develops meta models of compressibility---essentially domain-specific priors. 
These models allow the agent to classify strings into different ``structural families.'' 
For instance, a string known to contain large amounts of random noise can prompt a prior algorithmically favoring objects that plateau early, while a string representing a mathematical proof might elicit a prior that favors deep objects and steady progress over many drops.
Still, our analysis suggests that the complexity profile itself serves as a content-agnostic summary of an object's potential.

While the findings in this section are theoretical, they provide some conceptual grounding. 
They suggest that ``interestingness'' can be a principled heuristic for navigating the space of computable objects. 
To corroborate our findings, in the next section, we test the inductive property of interestingness experimentally in three different computationally universal paradigms.

\section{Empirical Results}
\label{sec:empirical}

The theoretical results from the previous section hold only for the busy beaver regime, which involves runtimes that are not physically realizable. 
While they provide insight into the fundamental algorithmic properties of objects and their effort-bounded compressibility, it is not immediately obvious how relevant these results are to real-world scenarios. 
To address this, we conduct empirical experiments that mirror our theoretical results.
We choose three fundamentally different, yet still universal and practical, computational paradigms that represent a wide variety of possible universal computers: 
2-Tag systems, elementary cellular automata with Rule 110, and the Brainfuck (BF) language.

In each of these systems, we run all programs up to a certain length with generous runtime limits. 
This allows us to construct real-world runtime vs. complexity profiles for all objects (i.e., output strings) that are computed by multiple programs with different runtimes. 
From these profiles, we can compute the relationship between steps since the last observed progress and the empirically achieved further compression progress (the equivalent to $\hat{k} - k_x$).
We also compute the number of steps between the last observed progress, and the last overall progress (the runtime equivalent of $m_x - \hat{m}$).
Using importance sampling, we re-weight objects based on their prior probabilities and plot the curves in Figures~\ref{fig:tag_results}, \ref{fig:rule110_results}, and \ref{fig:bf_results}. 
These are the empirical equivalents of the theoretical expectations shown in Figure~\ref{fig:expected_m}.

\begin{figure}[b!]
\begin{subfigure}[t]{0.49\textwidth}
\centering
\includegraphics[width=1.\textwidth]{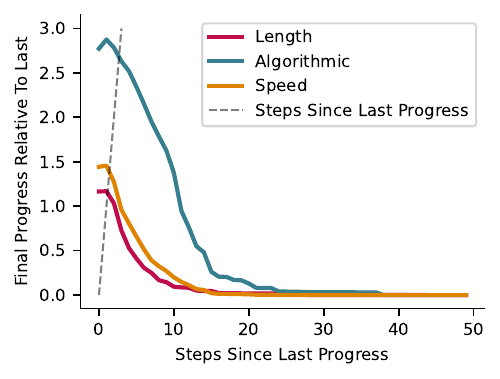}
\end{subfigure}
\begin{subfigure}[t]{0.49\textwidth}
\centering
\includegraphics[width=1.\textwidth]{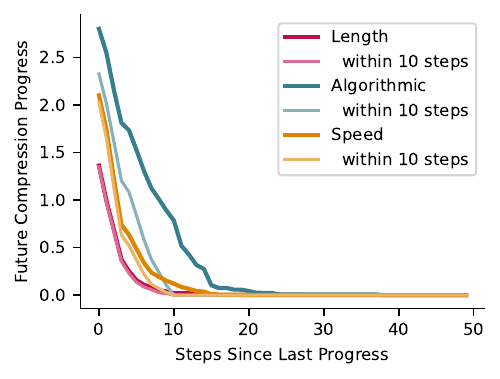}
\end{subfigure}
\caption{2-Tag Systems.
The average step of the final compression progress (left), and the average remaining compression progress (right), relative to the most recent progress. 
Plotted as a function of the steps since this most recent progress.
As the number of these steps (the stagnation length) increases, the  average remaining compression progress vanishes.
Curves shown for outputs weighted with the Length, Algorithmic and Speed Prior. 
The right plot uses lighter shades to indicate near-term future progress occurring shortly after the observation cutoff.
These results mirror the theoretical findings from Section~\ref{sec:priors}.
}
    \label{fig:tag_results}
\end{figure}

Recent compression progress implies further compression progress in the future, especially under the assumption of the Algorithmic Prior. 
The most significant difference from the theoretical results is with respect to the Speed Prior: 
in the heavily constrained empirical setting, the logarithmic bias against long runtimes is less pronounced. 
This is especially the case for Rule 110 cellular automata, for which---due to their construction---no programs with very short runtimes exist, because programs typically require a ``warm-up'' period to generate complex glider interaction.
The exact experimental setups for the three systems are detailed below\footnote{Detailed interactive visualization of the results in this section can be found at \href{https://inductive-interestingness-2026.github.io/future-compression-progress/\#experiments}{https://inductive-interestingness-2026.github.io/future-compression-progress/\#experiments}}.

\paragraph{2-Tag}
(Figure~\ref{fig:tag_results})
In this system, we run 2-Tag systems~\citep{post1943formal} with the alphabet $\{a, b, c, H\}$, where $H$ is the halting symbol. 
These kinds of systems have been shown to be Turing complete~\citep{wang1963tag, cocke1964universality}. 
We enumerate all possible production rules up to a combined length of 13, with $H$ appearing only at the end of a single rule. 
The starting word is always `$aaa$'. 
This leads to approximately 120 million programs which we run for up to 100k steps. 
If no output is produced within this limit, we consider the program non-halting. 
%

\paragraph{Rule 110}
(Figure~\ref{fig:rule110_results})
Rule 110 is a Turing-complete elementary cellular automaton~\citep{wolfram1983statistical, cook2004universality}. 
We use a state of size 512 with a cyclic boundary condition\footnote{We do this purely for practical reasons and are aware that this technically reduces the automaton to a finite state machine.}. 
The programs are defined as the leftmost bits of the tape, up to the last value of 1. 
We enumerate all programs up to length 25, resulting in approximately 34 million simulations. 
A program halts as soon as any particular state appears for a second time (which means that there is a cycle).
This state is then the output of the program. 
If no state repeats within 100k steps, we consider the program non-halting.
\begin{figure}[t]
\begin{subfigure}[t]{0.49\textwidth}
\centering
\includegraphics[width=1.\textwidth]{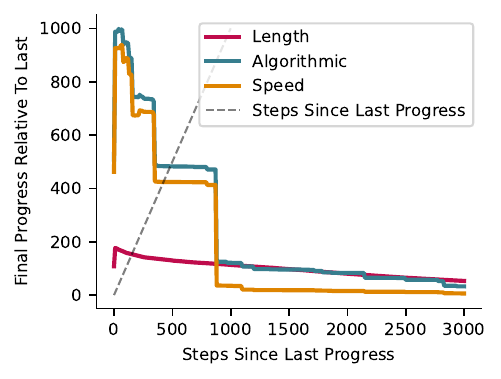}
\end{subfigure}
\begin{subfigure}[t]{0.49\textwidth}
\centering
\includegraphics[width=1.\textwidth]{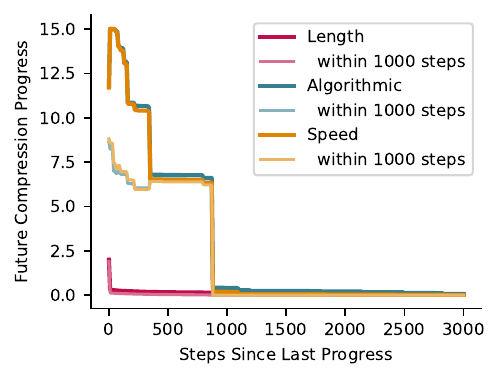}
\end{subfigure}
\caption{Rule 110. 
Analogous to Figure~\ref{fig:tag_results}. 
Due to there not being any programs with very short runtimes, the difference between Algorithmic Prior and Speed Prior is less pronounced.
}
    \label{fig:rule110_results}
\end{figure}

\begin{figure}[t]
\begin{subfigure}[t]{0.49\textwidth}
\centering
\includegraphics[width=1.\textwidth]{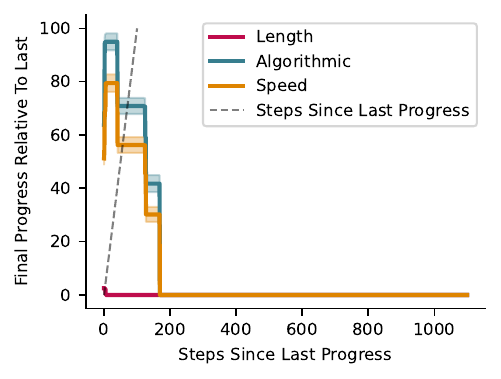}
\end{subfigure}
\begin{subfigure}[t]{0.49\textwidth}
\centering
\includegraphics[width=1.\textwidth]{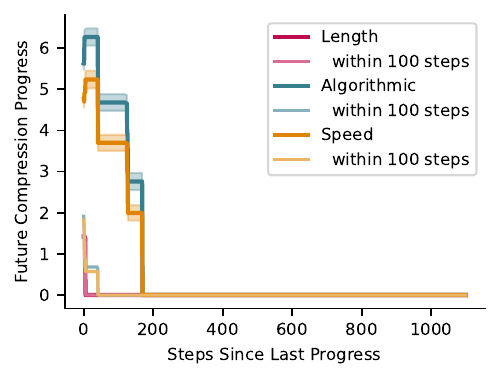}
\end{subfigure}
\caption{BF.
Analogous to Figures~\ref{fig:tag_results} and \ref{fig:rule110_results}.
In all three figures, results are plotted with the standard error. It is noticeable only here. 
    }
    \label{fig:bf_results}
\end{figure}

\paragraph{BF}
(Figure~\ref{fig:bf_results})
BF~\citep{muller1993brainfuck} is a highly compact Turing-complete programming language. 
Since we are not providing any external inputs, we omit the \textit{read} instruction `\texttt{,}'. 
We enumerate all programs up to length 11 consisting of the remaining 7 instructions, leading to approximately 2.3 billion different programs, which we run for up to 100k steps.

\section{Conclusion \& Future Work}
\label{sec:conclusion}

We have argued that \textit{Interestingness} is a necessary algorithmic heuristic for autonomous open-ended intelligence. 
By formalizing interestingness as a prospective assessment of future compression progress, we have grounded the concept in the rigorous framework of Algorithmic Information Theory. 
Our analysis of priors over computable objects---Length, Algorithmic, and Speed---reveals that the ``inductive property of interestingness'' is theoretically and empirically viable: past compression progress can indeed signal the potential for future discovery. 
With increasing stagnation length (gap since the last observed compression progress), the expected future compression progress diminishes exponentially.
The exact inductivity properties, however, depend on the choice of prior.

This reliance on specific priors, in particular, warrants further investigation.
An inversion of our inquiry could lead to further insights:
if we take the existence of inductive interestingness---the property that past compression progress reliably predicts future insight---as given, what must be true about the underlying distribution of objects?
This is a fundamental question for recursive self-improvement: how must a ``world'' be structured to enable never-ending progress? 
Our previous analysis of universal priors suggests a somewhat limited memory; the probability of future progress is largely determined by the recency of the last drop. 
However, in structured domains, we intuitively expect that the \textit{entire history} of progress matters. 
We do not expect long, steady sequences of discovery to terminate abruptly.
This property of sustained, multi-level learnability can be characterized as \textit{scale-free}~\citep{barabasi1999emergence} emergence. 
In algorithmic terms, drops in the $P_x$ profile represent transitions between different levels of description or emergence~\citep{bedard2022algorithmic}. 
If an object exhibits scale-free emergence, its profile contains a steady continuation of such drops across many orders of magnitude of complexity. 
Natural phenomena---such as biological systems, or weather and climate---appear to possess this property: 
they provide an almost endless well of regularities where each discovery uncovers a new layer of puzzles. 
Also synthetic artifacts with similar characteristics exist: for example, certain fractals like the Mandelbrot set, or cellular automata like Conway's Game of Life. 
For such objects, we would expect that a long history of past compression progress is a more robust indicator of future learnability. 
While \citet{jansma2025engineering} have shown that such objects exhibiting causal contributions which are spread out across many levels of coarse-graining can be engineered, it remains a challenge to formally define the conditions required for a prior to prefer these scale-free structures. 

Furthermore, we believe that to move beyond human-in-the-loop systems and towards truly self-improving open-ended intelligence, \textit{introspective assessment} of our models is necessary.
Current machine learning objectives focus almost exclusively on minimizing post-hoc loss. 
To achieve autonomy, we must develop introspective models capable of predicting their own learning progress. 
This requires architectures that do not just compress data, but explicitly model their own ``compression frontier''---identifying where an increase in computational effort is likely to yield the highest gain.
Our results suggest that one practical approach is to track how recently substantial progress was made on the data or tasks under consideration.
An autonomous agent must have the ability to decide whether an object is boring (meaning additional runtime will not yield further compression) or interesting (meaning it holds the potential for a complexity drop given more reasoning tokens).
Crucially, the exact nature of the computational effort in question requires a more rigorous taxonomy. 
Whether defined by longer recurrence, increased thinking steps, additional gradient descent iterations, or expanded model capacity, the underlying algorithmic relationship between these diverse resources remains a vital open question for future research.
It remains to be seen to what extent complexity and runtime profiles could allow us to view these types of computational effort under the same lens, and taking recent progress in one domain also as an indicator for future potential in another.

By shifting our focus from pure learning to the principled selection of \textit{what} to learn, we can begin to build systems that do not merely solve the tasks we give them, but autonomously seek to discover the richness of the universe they inhabit.

\newpage

\bibliography{biblio}

@inproceedings{sun2011planning,
  title={Planning to be surprised: Optimal bayesian exploration in dynamic environments},
  author={Sun, Yi and Gomez, Faustino and Schmidhuber, J{\"u}rgen},
  booktitle={International conference on artificial general intelligence},
  pages={41--51},
  year={2011},
  organization={Springer}
}

@article{baranes2013active,
  title={Active learning of inverse models with intrinsically motivated goal exploration in robots},
  author={Baranes, Adrien and Oudeyer, Pierre-Yves},
  journal={Robotics and Autonomous Systems},
  volume={61},
  number={1},
  pages={49--73},
  year={2013},
  publisher={Elsevier}
}

@article{kanitscheider2021multi,
  title={Multi-task curriculum learning in a complex, visual, hard-exploration domain: Minecraft},
  author={Kanitscheider, Ingmar and Huizinga, Joost and Farhi, David and Guss, William Hebgen and Houghton, Brandon and Sampedro, Raul and Zhokhov, Peter and Baker, Bowen and Ecoffet, Adrien and Tang, Jie and others},
  journal={arXiv preprint arXiv:2106.14876},
  year={2021}
}

@inproceedings{gaven2025magellan,
  title={MAGELLAN: Metacognitive predictions of learning progress guide autotelic LLM agents in large goal spaces},
  author={Gaven, Loris and Carta, Thomas and Romac, Cl{\'e}ment and Colas, C{\'e}dric and Lamprier, Sylvain and Sigaud, Olivier and Oudeyer, Pierre-Yves},
  booktitle={International Conference on Machine Learning},
  pages={18920--18953},
  year={2025},
  organization={PMLR}
}

@incollection{schmidhuber2003exploring,
  title={Exploring the predictable},
  author={Schmidhuber, Juergen},
  booktitle={Advances in evolutionary computing: theory and applications},
  pages={579--612},
  year={2003},
  publisher={Springer}
}

@inproceedings{schmidhuber1999artificial,
  title={Artificial curiosity based on discovering novel algorithmic predictability through coevolution},
  author={Schmidhuber, J{\"u}rgen},
  booktitle={Proceedings of the 1999 congress on evolutionary computation-cec99 (cat. no. 99th8406)},
  volume={3},
  pages={1612--1618},
  year={1999},
  organization={IEEE}
}

@article{blier2018description,
  title={The description length of deep learning models},
  author={Blier, L{\'e}onard and Ollivier, Yann},
  journal={Advances in Neural Information Processing Systems},
  volume={31},
  year={2018}
}

@article{muller1993brainfuck,
  title={Brainfuck--an eight-instruction turing-complete programming language},
  author={M{\"u}ller, Urban},
  journal={Available at the Internet address http://en. wikipedia. org/wiki/Brainfuck},
  year={1993}
}

@article{wolfram1983statistical,
  title={Statistical mechanics of cellular automata},
  author={Wolfram, Stephen},
  journal={Reviews of modern physics},
  volume={55},
  number={3},
  pages={601},
  year={1983},
  publisher={APS}
}

@article{cook2004universality,
  title={Universality in elementary cellular automata},
  author={Cook, Matthew and others},
  journal={Complex systems},
  volume={15},
  number={1},
  pages={1--40},
  year={2004},
  publisher={[Champaign, IL, USA: Complex Systems Publications, Inc., c1987-}
}

@article{post1943formal,
  title={Formal reductions of the general combinatorial decision problem},
  author={Post, Emil L},
  journal={American journal of mathematics},
  volume={65},
  number={2},
  pages={197--215},
  year={1943},
  publisher={JSTOR}
}

@article{wang1963tag,
  title={Tag systems and lag systems},
  author={Wang, Hao},
  journal={Mathematische Annalen},
  volume={152},
  number={1},
  pages={65--74},
  year={1963},
  publisher={Springer}
}

@article{cocke1964universality,
  title={Universality of tag systems with P= 2},
  author={Cocke, John and Minsky, Marvin},
  journal={Journal of the ACM (JACM)},
  volume={11},
  number={1},
  pages={15--20},
  year={1964},
  publisher={ACM New York, NY, USA}
}

@article{finzi2026entropy,
  title={From Entropy to Epiplexity: Rethinking Information for Computationally Bounded Intelligence},
  author={Finzi, Marc and Qiu, Shikai and Jiang, Yiding and Izmailov, Pavel and Kolter, J Zico and Wilson, Andrew Gordon},
  journal={arXiv preprint arXiv:2601.03220},
  year={2026}
}

@article{levin1984randomness,
  title={Randomness conservation inequalities; information and independence in mathematical theories},
  author={Levin, Leonid A},
  journal={Information and Control},
  volume={61},
  number={1},
  pages={15--37},
  year={1984},
  publisher={Elsevier}
}

@article{barabasi1999emergence,
  title={Emergence of scaling in random networks},
  author={Barab{\'a}si, Albert-L{\'a}szl{\'o} and Albert, R{\'e}ka},
  journal={science},
  volume={286},
  number={5439},
  pages={509--512},
  year={1999},
  publisher={American Association for the Advancement of Science}
}

@article{rissanen1978modeling,
  title={Modeling by shortest data description},
  author={Rissanen, Jorma},
  journal={Automatica},
  volume={14},
  number={5},
  pages={465--471},
  year={1978},
  publisher={Elsevier}
}

@article{wallace1968information,
  title={An information measure for classification},
  author={Wallace, Chris S and Boulton, David M},
  journal={The Computer Journal},
  volume={11},
  number={2},
  pages={185--194},
  year={1968},
  publisher={The British Computer Society}
}

@article{ha2018recurrent,
  title={Recurrent world models facilitate policy evolution},
  author={Ha, David and Schmidhuber, J{\"u}rgen},
  journal={Advances in neural information processing systems},
  volume={31},
  year={2018}
}

@inproceedings{bruce2024genie,
  title={Genie: Generative interactive environments},
  author={Bruce, Jake and Dennis, Michael D and Edwards, Ashley and Parker-Holder, Jack and Shi, Yuge and Hughes, Edward and Lai, Matthew and Mavalankar, Aditi and Steigerwald, Richie and Apps, Chris and others},
  booktitle={Forty-first International Conference on Machine Learning},
  year={2024}
}

@article{deletanglanguage,
  title={Language modeling is compression},
  author={Del{\'e}tang, Gr{\'e}goire and Ruoss, Anian and Duquenne, Paul-Ambroise and Catt, Elliot and Genewein, Tim and Mattern, Christopher and Grau-Moya, Jordi and Wenliang, Li Kevin and Aitchison, Matthew and Orseau, Laurent and others},
  journal={arXiv preprint arXiv:2309.10668},
  year={2023}
}

@book{hutter2005universal,
  title={Universal artificial intelligence: Sequential decisions based on algorithmic probability},
  author={Hutter, Marcus},
  year={2005},
  publisher={Springer Science \& Business Media}
}

@article{Tishby2015DeepLA,
 author = {Naftali Tishby and Noga Zaslavsky},
 journal = {2015 IEEE Information Theory Workshop (ITW)},
 title = {Deep Learning and the Information Bottleneck Principle},
 year = {2015}
}

@inproceedings{hazan2019provably,
  title={Provably efficient maximum entropy exploration},
  author={Hazan, Elad and Kakade, Sham and Singh, Karan and Van Soest, Abby},
  booktitle={International Conference on Machine Learning},
  pages={2681--2691},
  year={2019},
  organization={PMLR}
}

@phdthesis{mutti2023unsupervised,
           title = {Unsupervised reinforcement learning via state entropy maximization},
           month = {Marzo},
          author = {Mirco Mutti},
            year = {2023},
          school = {alma},
}

@article{meulemans2025embedded,
  title={Embedded Universal Predictive Intelligence: a coherent framework for multi-agent learning},
  author={Meulemans, Alexander and Nasser, Rajai and Wo{\l}czyk, Maciej and Weis, Marissa A and Kobayashi, Seijin and Richards, Blake and Lajoie, Guillaume and Steger, Angelika and Hutter, Marcus and Manyika, James and others},
  journal={arXiv preprint arXiv:2511.22226},
  year={2025}
}

@article{bedard2022algorithmic,
  title={An algorithmic approach to emergence},
  author={B{\'e}dard, Charles Alexandre and Bergeron, Geoffroy},
  journal={Entropy},
  volume={24},
  number={7},
  pages={985},
  year={2022},
  publisher={MDPI}
}

@article{jansma2025engineering,
  title={Engineering Emergence},
  author={Jansma, Abel and Hoel, Erik},
  journal={arXiv preprint arXiv:2510.02649},
  year={2025}
}

@inproceedings{schmidhuber2002speed,
  title={The Speed Prior: a new simplicity measure yielding near-optimal computable predictions},
  author={Schmidhuber, J{\"u}rgen},
  booktitle={International conference on computational learning theory},
  pages={216--228},
  year={2002},
  organization={Springer}
}

@article{kompella2017continual,
  title={Continual curiosity-driven skill acquisition from high-dimensional video inputs for humanoid robots},
  author={Kompella, Varun Raj and Stollenga, Marijn and Luciw, Matthew and Schmidhuber, Juergen},
  journal={Artificial Intelligence},
  volume={247},
  pages={313--335},
  year={2017},
  publisher={Elsevier}
}

@article{solomonoff1964formal,
  title={A formal theory of inductive inference. Part I},
  author={Solomonoff, Ray J},
  journal={Information and control},
  volume={7},
  number={1},
  pages={1--22},
  year={1964},
  publisher={Elsevier}
}

@phdthesis{bauwens2010computability,
  title={Computability in statistical hypotheses testing, and characterizations of independence and directed influences in time series using Kolmogorov complexity},
  author={Bauwens, Bruno},
  year={2010},
  school={Ghent University}
}

@InProceedings{herrmann2025measuring,
  title = 	 {Measuring In-Context Computation Complexity via Hidden State Prediction},
  author =       {Herrmann, Vincent and Csord\'{a}s, R\'{o}bert and Schmidhuber, J\"{u}rgen},
  booktitle = 	 {Proceedings of the 42nd International Conference on Machine Learning},
  year = 	 {2025},
  volume = 	 {267},
  series = 	 {Proceedings of Machine Learning Research},
  month = 	 {13--19 Jul},
  publisher =    {PMLR},
}

@article{faldor2024omni,
  title={Omni-epic: Open-endedness via models of human notions of interestingness with environments programmed in code},
  author={Faldor, Maxence and Zhang, Jenny and Cully, Antoine and Clune, Jeff},
  journal={arXiv preprint arXiv:2405.15568},
  year={2024}
}

@article{herrmann2025multiple,
  title={Multiple Token Divergence: Measuring and Steering In-Context Computation Density},
  author={Herrmann, Vincent and Alcaide, Eric and Wand, Michael and Schmidhuber, J{\"u}rgen},
  journal={arXiv preprint arXiv:2512.22944},
  year={2025}
}

@article{bellemare2016unifying,
  title={Unifying count-based exploration and intrinsic motivation},
  author={Bellemare, Marc and Srinivasan, Sriram and Ostrovski, Georg and Schaul, Tom and Saxton, David and Munos, Remi},
  journal={Advances in neural information processing systems},
  volume={29},
  year={2016}
}

@article{lin2024not,
  title={Not all tokens are what you need for pretraining},
  author={Lin, Zhenghao and Gou, Zhibin and Gong, Yeyun and Liu, Xiao and Xu, Ruochen and Lin, Chen and Yang, Yujiu and Jiao, Jian and Duan, Nan and Chen, Weizhu and others},
  journal={Advances in Neural Information Processing Systems},
  volume={37},
  pages={29029--29063},
  year={2024}
}

@article{colas2022autotelic,
  title={Autotelic agents with intrinsically motivated goal-conditioned reinforcement learning: a short survey},
  author={Colas, C{\'e}dric and Karch, Tristan and Sigaud, Olivier and Oudeyer, Pierre-Yves},
  journal={Journal of Artificial Intelligence Research},
  volume={74},
  pages={1159--1199},
  year={2022}
}

@article{shumailov2024ai,
  title={AI models collapse when trained on recursively generated data},
  author={Shumailov, Ilia and Shumaylov, Zakhar and Zhao, Yiren and Papernot, Nicolas and Anderson, Ross and Gal, Yarin},
  journal={Nature},
  volume={631},
  number={8022},
  pages={755--759},
  year={2024},
  publisher={Nature Publishing Group UK London}
}

@article{dohmatob2024model,
  title={Model collapse demystified: The case of regression},
  author={Dohmatob, Elvis and Feng, Yunzhen and Kempe, Julia},
  journal={Advances in Neural Information Processing Systems},
  volume={37},
  pages={46979--47013},
  year={2024}
}

@inproceedings{wang2023self,
  title={Self-instruct: Aligning language models with self-generated instructions},
  author={Wang, Yizhong and Kordi, Yeganeh and Mishra, Swaroop and Liu, Alisa and Smith, Noah A and Khashabi, Daniel and Hajishirzi, Hannaneh},
  booktitle={Proceedings of the 61st annual meeting of the association for computational linguistics (volume 1: long papers)},
  pages={13484--13508},
  year={2023}
}

@inproceedings{zelikman2024star,
  title={Star: Self-taught reasoner bootstrapping reasoning with reasoning},
  author={Zelikman, Eric and Wu, Yuhuai and Mu, Jesse and Goodman, Noah D},
  booktitle={Proc. the 36th International Conference on Neural Information Processing Systems},
  volume={1126},
  year={2024}
}

@incollection{vereshchagin2016algorithmic,
  title={Algorithmic statistics: forty years later},
  author={Vereshchagin, Nikolay and Shen, Alexander},
  booktitle={Computability and Complexity: Essays Dedicated to Rodney G. Downey on the Occasion of His 60th Birthday},
  pages={669--737},
  year={2016},
  publisher={Springer}
}

@inproceedings{hughes2024position,
  title={Position: open-endedness is essential for artificial superhuman intelligence},
  author={Hughes, Edward and Dennis, Michael and Parker-Holder, Jack and Behbahani, Feryal and Mavalankar, Aditi and Shi, Yuge and Schaul, Tom and Rockt{\"a}schel, Tim},
  booktitle={Proceedings of the 41st International Conference on Machine Learning},
  pages={20597--20616},
  year={2024}
}

@inproceedings{Schmidhuber:91singaporecur,
author = {J. Schmidhuber},
title =  {Curious Model-Building Control Systems},
booktitle = 
{Proceedings of the International Joint Conference on Neural Networks,
Singapore},
publisher = {IEEE press},
volume={2},
pages={1458-1463},
year   = {1991}}

@article{Zhang2023omni,
  title={OMNI: Open-endedness via Models of human Notions of Interestingness},
  author={Jenny Zhang and Joel Lehman and Kenneth O. Stanley and Jeff Clune},
  journal={ArXiv},
  year={2023},
  volume={abs/2306.01711},
  url={https://api.semanticscholar.org/CorpusID:259064135}
}

@article{muller2010stationary,
  title={Stationary algorithmic probability},
  author={M{\"u}ller, Markus},
  journal={Theoretical Computer Science},
  volume={411},
  number={1},
  pages={113--130},
  year={2010},
  publisher={Elsevier}
}

@techreport{Schmidhuber:90diff1,
author = {J. Schmidhuber},
title =  {Making the world differentiable:
On Using Fully Recurrent Self-Supervised Neural Networks for
Dynamic Reinforcement Learning and Planning in Non-Stationary
Environments},
institution = {Institut f\"{u}r Informatik,
	      Technische Universit\"{a}t M\"{u}nchen},
	      number = {FKI-126-90},
	      month={February},
	      note = { (In November there appeared a revised and extended
	      version.) },
	      year   = {1990}}

@article{learningtothink2015,
  title={On learning to think: Algorithmic information theory for novel combinations of reinforcement learning controllers and recurrent neural world models},
  author={Schmidhuber, Juergen},
  journal={Preprint arXiv:1511.09249},
  year={2015}
}

@techreport{Schmidhuber:97interesting,
author = {J. Schmidhuber},
title =  {What's interesting?},
institution = {IDSIA},
number = {IDSIA-35-97},
note={ftp://ftp.idsia.ch/pub/juergen/interest.ps.gz;
extended abstract in Proc. Snowbird'98, Utah, 1998;
see also \cite{Schmidhuber:02predictable}},
year   = {1997}}

@article{schmidhuber1991possibility,
  title={A possibility for implementing curiosity and boredom in model-building neural controllers},
  author={Schmidhuber, J{\"u}rgen},
  year={1991}
}

@ARTICLE{Schmidhuber:10ieeetamd,
author={Schmidhuber, J.},
journal={IEEE Transactions on Autonomous Mental Development}, 
title={Formal Theory of Creativity, Fun, and Intrinsic Motivation (1990-2010)},
year={2010},
volume={2},
number={3},
pages={230-247},
doi={10.1109/TAMD.2010.2056368},
ISSN={1943-0604},}

@inproceedings{Sutton1990IntegratedMA,
  title={Integrated Modeling and Control Based on Reinforcement Learning and Dynamic Programming},
  author={Richard S. Sutton},
  booktitle={Neural Information Processing Systems},
  year={1990},
  url={https://api.semanticscholar.org/CorpusID:62620110}
}

@article{tang2017exploration,
  title={\# exploration: A study of count-based exploration for deep reinforcement learning},
  author={Tang, Haoran and Houthooft, Rein and Foote, Davis and Stooke, Adam and Xi Chen, OpenAI and Duan, Yan and Schulman, John and DeTurck, Filip and Abbeel, Pieter},
  journal={Advances in neural information processing systems},
  volume={30},
  year={2017}
}

@incollection{Schmidhuber:90sab,
author = {J. Schmidhuber},
title =  {A Possibility for Implementing Curiosity and Boredom in
Model-Building Neural Controllers},
booktitle = 
{Proc. of the International Conference on 
Simulation of Adaptive Behavior: From Animals to Animats},
pages={222-227},
editor={J. A. Meyer and S. W. Wilson},
publisher={MIT Press/Bradford Books},
year   = {1991}}

@inproceedings{pathak2017curiosity,
  title={Curiosity-driven exploration by self-supervised prediction},
  author={Pathak, Deepak and Agrawal, Pulkit and Efros, Alexei A and Darrell, Trevor},
  booktitle={International conference on machine learning},
  pages={2778--2787},
  year={2017},
  organization={PMLR}
}

@InProceedings{pmlr-v97-pathak19a,
  title = 	 {Self-Supervised Exploration via Disagreement},
  author =       {Pathak, Deepak and Gandhi, Dhiraj and Gupta, Abhinav},
  booktitle = 	 {Proceedings of the 36th International Conference on Machine Learning},
  pages = 	 {5062--5071},
  year = 	 {2019},
  editor = 	 {Chaudhuri, Kamalika and Salakhutdinov, Ruslan},
  volume = 	 {97},
  series = 	 {Proceedings of Machine Learning Research},
  month = 	 {09--15 Jun},
  publisher =    {PMLR},
  url = 	 {https://proceedings.mlr.press/v97/pathak19a.html},
}

@inproceedings{shyam2019model,
  title={Model-based active exploration},
  author={Shyam, Pranav and Ja{\'s}kowski, Wojciech and Gomez, Faustino},
  booktitle={International conference on machine learning},
  pages={5779--5788},
  year={2019},
  organization={PMLR}
}

@InProceedings{pmlr-v119-sekar20a,
  title = 	 {Planning to Explore via Self-Supervised World Models},
  author =       {Sekar, Ramanan and Rybkin, Oleh and Daniilidis, Kostas and Abbeel, Pieter and Hafner, Danijar and Pathak, Deepak},
  booktitle = 	 {Proceedings of the 37th International Conference on Machine Learning},
  pages = 	 {8583--8592},
  year = 	 {2020},
  editor = 	 {III, Hal Daumé and Singh, Aarti},
  volume = 	 {119},
  series = 	 {Proceedings of Machine Learning Research},
  month = 	 {13--18 Jul},
  publisher =    {PMLR},
  url = 	 {https://proceedings.mlr.press/v119/sekar20a.html},
}

@article{sancaktar2022curious,
  title={Curious exploration via structured world models yields zero-shot object manipulation},
  author={Sancaktar, Cansu and Blaes, Sebastian and Martius, Georg},
  journal={Advances in Neural Information Processing Systems},
  volume={35},
  pages={24170--24183},
  year={2022}
}

@inproceedings{Storck:95,
author = {J. Storck and S. Hochreiter and J. Schmidhuber},
title =  {Reinforcement driven information acquisition in non-deterministic
environments},
booktitle = {Proceedings of the International Conference on
Artificial Neural Networks, Paris},
publisher = {EC2 \& Cie},
volume={2},
pages = {159-164},
year   = {1995}}

@incollection{itti:05,
 title = {Bayesian Surprise Attracts Human Attention},
 author = {L. Itti and P. F. Baldi},
 booktitle = {Advances in Neural Information Processing Systems  (NIPS) 19},
 publisher = {MIT Press},
 address = {Cambridge, MA},
 pages = {547--554},
 year = {2005}
}

@Article{oudeyerkaplan07,
author =      {P-Y. Oudeyer and F. Kaplan and V. F. Hafner},
title =      {Intrinsic Motivation Systems for Autonomous Mental Development},
journal =      {IEEE Transactions on Evolutionary Computation},
year =      {2007},
volume =     11,
number =     2,
pages ={265-286}
}

@inproceedings{stout2010competence,
  title={Competence progress intrinsic motivation},
  author={Stout, Andrew and Barto, Andrew G},
  booktitle={2010 IEEE 9th International Conference on Development and Learning},
  pages={257--262},
  year={2010},
  organization={IEEE}
}

@INCOLLECTION{Schmidhuber:09abials,
 author = {J. Schmidhuber},
 title = {Driven by Compression Progress: A Simple Principle Explains
Essential Aspects of Subjective Beauty, Novelty, Surprise, Interestingness, Attention, Curiosity, Creativity, Art, Science, Music, Jokes},
 booktitle = {Anticipatory Behavior in Adaptive Learning Systems. From Psychological Theories to  Artificial Cognitive Systems},
 publisher = {Springer},
 editor = {G. Pezzulo and M. V. Butz and O. Sigaud and G. Baldassarre},
 series = {LNCS},
 volume={5499},
 pages = "48-76",
year = {2009}
}

@Article{Schmidhuber:06cs,
author="J. Schmidhuber",
title="Developmental Robotics, Optimal Artificial Curiosity, Creativity, Music, and the Fine Arts",
journal="Connection Science",
volume="18",
number="2",
pages="173-187",
year="2006"
}

@incollection{Bennett:88,
author={C. H. Bennett},
title={Logical depth and physical complexity},
booktitle = {The Universal {Turing} Machine: A Half Century Survey},
publisher ={Oxford University Press, Oxford and Kammerer \& Unverzagt, Hamburg},
volume={1},
pages={227-258},
year={1988}}

@article{koppel1987complexity,
  title={Complexity, depth, and sophistication},
  author={Koppel, Moshe},
  journal={Complex Systems},
  volume={1},
  number={6},
  pages={1087--1091},
  year={1987}
}

@incollection{LiVitanyi:90,
author = {M. Li and P. M. B. Vit\'{a}nyi},
title =  {An Introduction to {Kolmogorov} Complexity and its Applications},
booktitle =
{Handbook of Theoretical Computer Science},
pages={188-254},
editor={J. van Leeuwen},
publisher={Elsevier Science Publishers B.V.},
year   = {1990}}

@article{antunes2017sophistication,
  title={Sophistication vs logical depth},
  author={Antunes, Lu{\'\i}s and Bauwens, Bruno and Souto, Andr{\'e} and Teixeira, Andreia},
  journal={Theory of Computing Systems},
  volume={60},
  number={2},
  pages={280--298},
  year={2017},
  publisher={Springer}
}

@article{antunes2009sophistication,
  title={Sophistication revisited},
  author={Antunes, Lu{\'\i}s and Fortnow, Lance},
  journal={Theory of Computing Systems},
  volume={45},
  pages={150--161},
  year={2009},
  publisher={Springer}
}

@article{pugh2016quality,
  title={Quality diversity: A new frontier for evolutionary computation},
  author={Pugh, Justin K and Soros, Lisa B and Stanley, Kenneth O},
  journal={Frontiers in Robotics and AI},
  volume={3},
  pages={202845},
  year={2016},
  publisher={Frontiers}
}

@inproceedings{brant2017minimal,
  title={Minimal criterion coevolution: a new approach to open-ended search},
  author={Brant, Jonathan C and Stanley, Kenneth O},
  booktitle={Proceedings of the Genetic and Evolutionary Computation Conference},
  pages={67--74},
  year={2017}
}

@inproceedings{lehman2012beyond,
  title={Beyond open-endedness: Quantifying impressiveness},
  author={Lehman, Joel and Stanley, Kenneth O},
  booktitle={Artificial Life Conference Proceedings},
  pages={75--82},
  year={2012},
  organization={MIT Press}
}
\bibliographystyle{icml2026}

\newpage
\appendix
\onecolumn

\section{Mathematical Proofs}
\label{app:proofs}
We start with some preliminaries. 
From Equations (\ref{eq:min_per_profile}) and (\ref{eq:max_per_profile}), we learn that the number of strings with a profile $O(C(P) + \log n_P)$-close to the given profile $P$ is
\begin{equation}
\label{eq:num_strings_per_profile}
2^{k_P - m_P + O(\log n_P)}. 
\end{equation}

Here, we set $\epsilon$ to be a small constant value, which also justifies replacing $m_P(\epsilon)$, as defined by~\citet{vereshchagin2016algorithmic} for technicalities of their proof, with $m_P$.

\begin{wrapfigure}[20]{r}{0.45\textwidth}
\begin{subfigure}[t]{0.45\textwidth}
\vspace{-7mm}
\centering
\includegraphics[trim={7mm 7mm 7mm 7mm},clip,width=1.\textwidth]{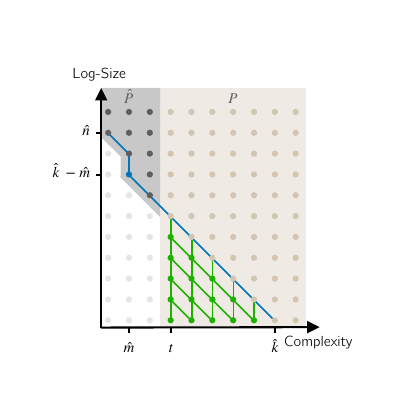}
\end{subfigure}
\caption{Continuation of profile $\hat{P}$ with no further drop (blue), and all possible continuations with $m \geq t$ (green).}
    \label{fig:px_future}
\end{wrapfigure}
A partial profile $\hat{P}$ characterizes sets containing a certain $x$, up to—but not including—complexity $t$.
It is described by the values
\begin{align*}
\hat{n} &\coloneq \min \{r \mid (0, r) \in \hat{P}\}, \\
\hat{k} &\coloneq \min \{r + s \mid (s, r) \in \hat{P}\}, \text{and} \\
\hat{m} &\coloneq \min \{r \mid (r, \hat{k} - r) \in \hat{P}\}, 
\end{align*}
with $0 <\hat{m} < t \leq \hat{k} < \hat{n}$.
We want to quantify how many strings there are for any possible continuation of the partial profile $\hat{P}$ beyond $t$ to a full profile $P \supset \hat{P}$ with $\{(i, j) \in P \;\forall i < t\} = \hat{P}$.
For all these profiles $P$, $n_P = \hat{n}$, since the start of the profile is $\hat{P}$.
For the continuations, we distinguish two different scenarios: 
Either there is no further drop after $\hat{P}$ (the blue trajectory in Figure~\ref{fig:px_future}), or there is at least one more drop (the green trajectories in the same figure).
Let's start with the no-further-drop case, where progress \textit{stops} before $t$. 
Here, the values $\hat{n}, \hat{k}$ and $\hat{m}$ characterize also the full profile. 
This means the number of strings that follow profile $\hat{P}$ with no further drops is, according to Equation~\ref{eq:num_strings_per_profile},
\begin{align*}
N_\text{stop} \coloneq 2^{\hat{k} - \hat{m} + O(\log \hat{n})}. 
\end{align*}
Defining the constant $g_\text{stop} \coloneq 2^{O(\log \hat{n})}$, we write this as 
\begin{align}
N_\text{stop} = g_\text{stop} 2^{\hat{k} - \hat{m}}. \label{eq:n_stop}
\end{align}

The constant $g_\text{stop}$ has a specific value, associated with the number of strings having a profile with the last drop set by $\hat{m}$ and $\hat{k}$.

If there is at least one more drop, the number of strings with a certain profile depends on the exact position of the last drop, characterized by $k$ and $m$.
Since in this case, the complexity $m$ of the last drop is at least $t$, we describe further drop condition as ``$m \geq t$''.
The total number of strings following profile $\hat{P}$ and then exhibiting a further drop is the sum of strings with a particular drop position, summed over all possible drop positions:
\begin{align*}
N_\text{drop} \coloneq \sum_{m=t}^{\hat{k}-1} \sum_{k=m}^{\hat{k}-1} 2^{k - m + O(\log \hat{n})}. 
\end{align*}
The outer sum enumerates the position of the drops along the complexity axis, the inner sum the size of the drop along the log-size axis. 
We again re-write this using a constant $g_\text{drop} \coloneq \sum_{m=t}^{\hat{k}-1} \sum_{k=m}^{\hat{k}-1} 2^{O(\log{ \hat{n}})} = 2^{O(\log \hat{n})}$:
\begin{align}
N_\text{drop}  &= \sum_{m=t}^{\hat{k}-1} \sum_{k=m}^{\hat{k}-1} 2^{O(\log{ \hat{n}})}  \sum_{m=t}^{\hat{k}-1} \sum_{k=m}^{\hat{k}-1} 2^{k - m } \nonumber \\
&= g_\text{drop} \sum_{m=t}^{\hat{k}-1} \sum_{k=m}^{\hat{k}-1} 2^{k - m }. \numberthis \label{eq:n_drop}
\end{align}

We have to consider a detail: 
Equation~\ref{eq:num_strings_per_profile} does not quantify the number of strings with a profile of that matches exactly $P$, but the number of strings with a profile \textit{close} to $P$.
Hence, in the sums above, we may count the same string multiple times. 
However, the following Lemma shows that this double-counting does not contribute much.

\begin{lemma}
Consider the set of valid profiles with fixed $n$, and all allowed last drop positions parametrized by $(m, k)$-tuples.
There are at most $O(k_x^2 \log^2 n)$ profiles from this set to which the profile $P_x$ of a given string $x$, with $n_x = n$, is $O(C(P) + O(\log n))$-close.
\end{lemma}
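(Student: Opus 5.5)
The plan is to count, for a fixed string $x$ with $n_x = n$, the admissible tuples $(m,k)$ whose associated profile $P$ can lie within the closeness tolerance of $P_x$. The profiles in the set all share the fixed prefix $\hat{P}$ and differ only in where the last drop happens, so each is determined by the pair $(m,k)$ with $t \le m \le k \le \hat{k}-1$. The first observation is that closeness forces the characteristic values of $P$ to match those of $P_x$ up to the tolerance. If $P_x$ is $\delta$-close to $P$, with $\delta = O(C(P) + \log n)$, then each boundary is contained in the $\delta$-neighbourhood of the other. Hence the point $(k_P,0)$ must lie within $\delta$ of the boundary of $P_x$ near the complexity axis, giving $|k_P - k_x| \le O(\delta)$. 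Likewise, the point where the boundary joins the final slope-$-1$ diagonal is pinned, giving $|m_P - m_x| \le O(\delta)$. In the same way, $|n_P - n_x|\le O(\delta)$, which is consistent with $n$ being fixed.

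Next I would bound $\delta$ itself. Within the family, $P$ is the fixed partial profile $\hat{P}$ followed by one drop parametrized by $(m,k)$. So $C(P) \le C(\hat{P}) + O(\log m + \log k)$, and since $m,k \le \hat{k} \le n$ this is $O(\log n)$ once we treat the description of $\hat{P}$ as a fixed additive term absorbed into the tolerance, as the paper already does when writing $2^{O(\log \hat{n})}$. Consequently, the allowed window for $k$ around $k_x$ has width $O(\log n)$. The allowed window for $m$ around $m_x$ also has width $O(\log n)$. Multiplying gives $O(\log^2 n)$ candidate tuples in the clean case.

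The stated bound carries an extra factor $k_x^2$, and I would obtain it conservatively. Instead of relying on the pinning of $m_x$ (the least robust step, since $m_x$ is defined as a boundary point of a set and can shift sharply under small perturbations of the profile), I would bound the number of positions more crudely. For each of $m$ and $k$, the range is $\{0,\dots,k_x + O(\log n)\}$, because $k_P \le k_x + O(\delta)$ and $m\le k$. This coarse range supplies a factor $O(k_x)$ per coordinate. Any robust pinning that is available then contributes the $O(\log n)$ factors, and crudely combining everything yields $O(k_x^2\log^2 n)$.

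The main obstacle is the pinning argument. It requires translating $\delta$-closeness of upward-closed sets into control of the specific values $k_P$ and $m_P$. For $k_P$ this is direct: intersect both sets with the line $j=0$. For $m_P$ one must argue that a slope-$-1$ segment in one profile forces a nearby slope-$-1$ segment in the other. I would attempt this by using the validity condition $(a,b+c)\in P \Rightarrow (a+b,c)\in P$ to shift points along the diagonal. If that fails, I would fall back on the crude $O(k_x)$ range for $m$.

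Finally, I would note how the lemma is used. Each string is counted at most $O(k_x^2\log^2 n) = 2^{O(\log n)}$ times in the sums defining $N_{\text{drop}}$. This overcounting is absorbed into the constant $g_{\text{drop}}$.
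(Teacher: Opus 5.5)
Your skeleton matches the paper's: closeness confines the last drop of $P$ to a square of side $O(\delta)$ around $(m_x, k_x - m_x)$, and the lemma counts the lattice points in that square. The gap is in how you size $\delta = O(C(P) + \log n)$.

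You write $C(P) \le C(\hat P) + O(\log n)$, then treat $C(\hat P)$ as an additive term \emph{absorbed into the tolerance}. From this you conclude that the windows for $m$ and $k$ have width $O(\log n)$. But the tolerance is exactly what sets the side of the square. Absorbing $C(\hat P)$ into it makes the side $O(C(\hat P) + \log n)$, not $O(\log n)$. And $C(\hat P)$ is not small in general: $\hat P$ is an essentially arbitrary staircase of slope at most $-1$ over $t$ complexity values, so its complexity can be of order $t\log(n/t)$. Your clean-case count $O(\log^2 n)$ therefore holds only for very simple $\hat P$.

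The crude fallback does not repair this. The range $\{0,\dots,k_x + O(\log n)\}$ again presupposes $\delta = O(\log n)$; with the true tolerance it is $\{0,\dots,k_x + O(C(\hat P) + \log n)\}$. Multiplying that crude count by the pinned count is also not where $k_x^2$ comes from. The number of admissible tuples is bounded by the smaller of the two bounds, so the product is just slack.

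What you need is an a priori bound on $C(P)$ itself. The paper encodes $P$ as a list of at most $\hat k$ integers, each at most $n$, giving $C(P) \le \hat k \log n$. The square then has side $O(\hat k \log n)$ and contains $O(\hat k^2 \log^2 n)$ points, which the paper writes as $O(k_x^2\log^2 n)$ by identifying $\hat k$ with $k_x$. So the quadratic factor is the genuine size of the tolerance window, not a conservative add-on.

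You flag as the main obstacle the step that keeps $m_P$ near $m_x$ under $\delta$-closeness. The paper does not derive this from the validity condition either. Instead it assumes, in a footnote, that all drops have size $\Omega(C(P) + \log n)$, which is what makes the position of the last drop stable under closeness. You should state that assumption explicitly rather than try to prove it. Without it, a last drop shallower than $\delta$ can be smoothed away by a $\delta$-perturbation, so its location is genuinely not pinned.
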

\begin{proof}
Let $(m_x, k_x)$ be the parameters characterizing the last drop of $x$'s profile $P_x$. 
To be $O(C(P) + O(\log n))$-close to $P_x$, the last drop\footnote{The drop has to be of size $\Omega(C(P) + \log n)$. 
We are always assuming drop sizes that are more than logarithmic.} of any profile $P$ has to occur within a square with a size of exactly that precision around the point $(m_x, k_x - m_x)$.
There are only $O(C(P)^2 + O(\log n)^2)$ points in such a square.
From the fact that $P$ can always be represented as a list of integers not bigger than $n$, it follows that $C(P) \leq \hat{k} \log n$.
Therefore, the square of acceptable positions of the last drop contains $O(k_x^2 \log^2 n)$ points, each with their own valid profile.
\end{proof}

The double-counting can thus be absorbed into the polynomial constant $g_\text{drop}$.
Also this constant has a specific value, depending on the precise number of strings with each of the different last drops, and the mentioned double counting.
We keep the explicit constants because they can be reduced:
For example, $\frac{g_\text{drop}}{g_\text{drop}} = 1$, whereas in general $\frac{2^{O(\log \hat{n})}}{2^{O(\log \hat{n})}} = 2^{O(\log \hat{n}) - O(\log \hat{n})} = 2^{O(\log \hat{n})}$.

Each point in a profile corresponds to a two-part description of a string.
The priors introduced in Section~\ref{sec:priors} give probabilistic weights to strings, depending on these descriptions.

\subsection{Length Prior}
\label{app:proof_length}
For the Length Prior $L$ (Equation~\ref{eq:length_prior}), the probability of a string depends only on its length, i.e. on the value $\hat{n}$ of its profile $P$ (no matter if partial or non-partial, since the length is always determined from the start). 
Since all strings we are considering share the same initial partial profile $\hat{P}$, they have the same probability, and we can base our following calculations purely on the relative number of strings.

\subsubsection{Proof of Proposition~\ref{prop:length_prior}\ref{item:length_prob}: Probability of a Further Drop}

\begin{proof}
The probability that after $\hat{P}$ there will be a further drop is simply the fraction $\frac{N_\text{drop}}{N_\text{drop} + N_\text{stop}}$ of strings exhibiting a further drop. 
$N_\text{drop}$ (Equation~\ref{eq:n_drop}) can be simplified:
\begin{align}
    N_\text{drop} &= g_\text{drop} \sum_{m=t}^{\hat{k}-1} \sum_{k=m}^{\hat{k}-1} 2^{k - m } \nonumber \\
    &= g_\text{drop} \left( 2^{\hat{k} - t + 1} - \hat{k} + t - 2 \right) \nonumber \\
    &= 2^{O(\log \hat{n})} 2^{\hat{k} - t} 
\end{align}

For $\hat{k} - t \gg 0$, we can approximate
\begin{align}
    N_\text{drop} \approx g_\text{drop} 2^{\hat{k} - t + 1}.  \label{eq:n_drop_approx}
\end{align}

With that, we can calculate the probability of a further drop:
\begin{align}
    p_L(m \geq t \mid \hat{P}) &= \frac{N_\text{drop}}{N_\text{drop} + N_\text{stop}} \nonumber \\
    &= \frac{1}{1 + \frac{N_\text{stop}}{N_\text{drop}}} \nonumber \\
    &= \frac{1}{1 + \frac{2^{O(\log \hat{n})}}{2^{O(\log \hat{n})}} \frac{2^{\hat{k} - \hat{m}}}{2^{\hat{k} - t}}} \nonumber \\
    &= 2^{- (t-\hat{m}) + O(\log \hat{n})} \label{eq:p_drop_L}
\end{align}

It diminishes exponentially as the gap between the complexity of the last observed drop $\hat{m}$ and the observation cutoff $t$ grows.
\end{proof}

\subsubsection{Probability of Specific Drop Position, Given $m \geq t$}
\begin{wrapfigure}[12]{r}{0.45\textwidth}
\begin{subfigure}[t]{0.45\textwidth}
\vspace{-6mm}
\centering
\includegraphics[trim={7mm 0mm 7mm 0mm},clip,width=1.\textwidth]{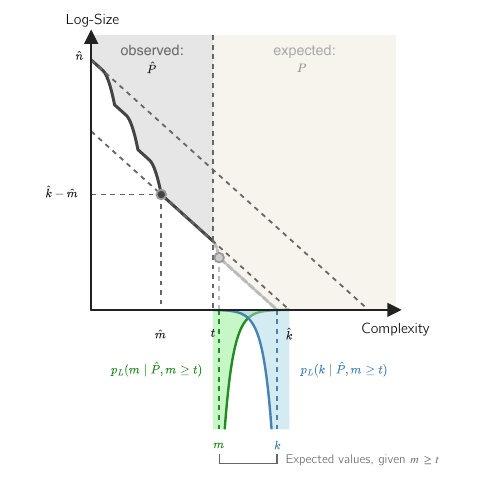}
\end{subfigure}
\vspace{-3mm}
\caption{Expected continuation of profile $\hat{P}$ under the Length Prior, assuming the existence of a further drop.}
    \label{fig:px_L_prior_drop}
\end{wrapfigure}
Given $\hat{P}$ and $m \geq t$, the number of strings with a profile where the last drop occurs at a specific complexity $m^*$ is
\begin{align}
    &\sum_{m=t}^{\hat{k}-1} \sum_{k=m}^{\hat{k}-1} \mathbf{1}[m=m^*]2^{k - m + O(\log \hat{n})} \nonumber \\
    =\; &\sum_{k=m^*}^{\hat{k}-1} 2^{k-m^*+ O(\log \hat{n})} \nonumber \\
    =\; &2^{\hat{k}-m^* + O(\log \hat{n})}.
\end{align}

The probability of this drop position is hence
\begin{align}
    p_L(m = m^* \mid \hat{P}, m \geq t) 
    &= \frac{2^{\hat{k}-m^*+O(\log \hat{n})}}{N_\text{drop}} \nonumber \\
    &= \frac{2^{\hat{k}-m^*+O(\log \hat{n})}}{2^{\hat{k} - t + O(\log \hat{n})}} \nonumber \\
    &= 2^{t - m^* + O(\log \hat{n})}. \label{eq:p_m_L}
\end{align}

This shows that the probability of the last drop occurring at $m^*$ decays exponentially as $m^*$ moves away from the current threshold $t$. 
We now perform a parallel derivation for the final complexity $k^*$.
The number of strings with a specific complexity $k^*$, given $\hat{P}$ and $m \geq t$, is
\begin{align}
    \sum_{m=t}^{\hat{k}-1} \sum_{k=m}^{\hat{k}-1} \mathbf{1}[k=k^*]2^{k - m + O(\log \hat{n})} 
    &= \sum_{m=t}^{k^*} 2^{k^*-m + O(\log \hat{n})} \nonumber \\
    &= 2^{k^* - t + O(\log \hat{n})},
\end{align}

and the probability of this complexity is 
\begin{align}
    p_L(k = k^* \mid \hat{P}, m \geq t) 
    &= \frac{2^{k^*-t + O(\log \hat{n})}}{N_\text{drop}} \nonumber \\
    &= \frac{2^{k^*-t + O(\log \hat{n})}}{2^{\hat{k}-t + O(\log \hat{n})}} \nonumber \\
    &= 2^{k^* - \hat{k} + O(\log \hat{n})}. \label{eq:p_k_L}
\end{align}

These results are visualized in Figure~\ref{fig:px_L_prior_drop}.

\subsubsection{Proof of Proposition~\ref{prop:length_prior}\ref{item:length_exp_m}: Expected Values, Given $m \geq t$}
\label{sec:expected_given_drop_L}
\begin{proof}
Now we calculate the expected values of $m$ and $k$ under the Length Prior, under the condition of a further drop $m \geq t$.
For that, we first compute the sums of the number of strings with a certain profile, multiplied with the respective value of $m$ and $k$:

\begin{align}
    M_L &\coloneq g_\text{drop} \sum_{m=t}^{\hat{k}-1} \sum_{k=m}^{\hat{k}-1} m2^{k - m} \nonumber \\
    &= g_\text{drop}  \left(2^{\hat{k} - t + 1} \left(t + 1\right) - 2 \hat{k} + \frac{1}{2} \left(- \hat{k}^{2} - 3\hat{k} + t^{2} - t - 4\right) \right), \\
    \intertext{and}
    K_L &\coloneq g_\text{drop} \sum_{m=t}^{\hat{k}-1} \sum_{k=m}^{\hat{k}-1} k2^{k - m} \nonumber \\
    &= g_\text{drop}  \left(2^{\hat{k} - t + 1} \left(\hat{k} - 2\right) + \frac{1}{2} \left(- \hat{k}^{2} + \hat{k} + t^{2} - 5t + 8\right) \right) 
\end{align}

For $\hat{k} - t \gg 0$, these values simplify to 
\begin{align}
    M_L &\approx g_\text{drop}2^{\hat{k} - t + 1} \left(t + 1\right), \\ \intertext{and} 
    K_L &\approx g_\text{drop}2^{\hat{k} - t + 1} \left(\hat{k} - 2\right).
\end{align}

The expected values are then, using Equation~\ref{eq:n_drop_approx},
\begin{align}
    \mathbb{E}_{x \sim L}[m \mid \hat{P}, m \geq t] &= \frac{M_L}{N_\text{drop}} \nonumber \\
    &\approx \frac{g_\text{drop}2^{\hat{k} - t + 1} \left(t + 1\right)}{g_\text{drop} 2^{\hat{k} - t + 1}} \nonumber \\
    &= t+1,  \label{eq:expected_m_L_conditioned} \\
    \intertext{and}
    \mathbb{E}_{x \sim L}[k \mid \hat{P}, m \geq t] &= \frac{K_L}{N_\text{drop}} \nonumber \\
    &\approx \frac{g_\text{drop}2^{\hat{k} - t + 1} \left(\hat{k} - 2\right)}{g_\text{drop} 2^{\hat{k} - t + 1}} \nonumber \\
    &= \hat{k} - 2. \label{eq:expected_k_L_conditioned}
\end{align}

Under the Length Prior, further drops are most likely to occur close to the observation threshold $t$, and to be small, leading to a high singleton complexity.
\end{proof}

\subsubsection{Proof of Proposition~\ref{prop:length_prior}\ref{item:length_progress}: Expected Progress, Conditioned Only on $\hat{P}$}
\label{sec:expected_L}

\begin{proof}
Observing the partial profile $\hat{P}$, the difference in complexity between the last observed drop at $\hat{m}$, and the expected last drop is
\begin{align}
    \mathbb{E}_{x \sim L}[m \mid \hat{P}] -\hat{m} &= \hat{m} \left( 1 - p_L(m \geq t \mid \hat{P})\right) \; + \; \mathbb{E}_{x \sim L}[m \mid \hat{P}, m \geq t] \; p_L(m \geq t \mid \hat{P}) - \hat{m} \nonumber \\
    &= p_L(m \geq t \mid \hat{P}) \left( \mathbb{E}_{x \sim L}[m \mid \hat{P}, m \geq t] - \hat{m}\right). \label{eq:expected_position_length}
\end{align}

Since the second factor (the complexity difference) is at most linear in $\hat{k}$, it is dominated by the $2^{O(\log \hat{n})}$ of the probability, and thus, with Equations~\ref{eq:p_drop_L} and \ref{eq:expected_m_L_conditioned},
\begin{align}
    \mathbb{E}_{x \sim L}[m \mid \hat{P}] -\hat{m} = 2^{- (t-\hat{m}) + O(\log \hat{n})}.
\end{align}

The expected compression progress, relative to $\hat{k}$, is 
\begin{align}
    \hat{k} -\mathbb{E}_{x \sim L}[k \mid \hat{P}] &= \hat{k} -\hat{k} \left( 1 - p_L(m \geq t \mid \hat{P})\right) \; - \; \mathbb{E}_{x \sim L}[k \mid \hat{P}, m \geq t] \; p_L(m \geq t \mid \hat{P}) \nonumber \\
    &= p_L(m \geq t \mid \hat{P}) \left( \hat{k} - \mathbb{E}_{x \sim L}[k \mid \hat{P}, m \geq t] \right), \label{eq:expected_cp_length}
\end{align}

and again, with Equations~\ref{eq:p_drop_L} and \ref{eq:expected_k_L_conditioned},

\begin{align}
    \hat{k} -\mathbb{E}_{x \sim L}[k \mid \hat{P}] = 2^{- (t-\hat{m}) + O(\log \hat{n})}. \label{eq:expected_cp_L}
\end{align}
\end{proof}

\subsubsection{Probability of a Drop Within a Given Window}
\label{sec:probability_of_drop_within_d_length}
\begin{wrapfigure}[18]{r}{0.45\textwidth}
\begin{subfigure}[t]{0.45\textwidth}
\vspace{-7mm}
\centering
\includegraphics[trim={7mm 7mm 7mm 7mm},clip,width=1.\textwidth]{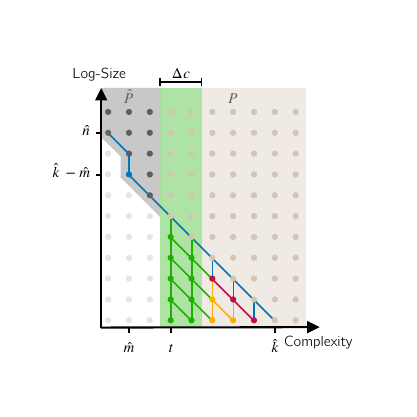}
\end{subfigure}
\caption{Continuation of profile $\hat{P}$. 
In blue, continuation with no further drop.
In green, continuations with their last drop within $\Delta c$ after $t$.
In red, continuations that have their last drop after the interval, and cannot have any drop within the interval.
In yellow, continuations that have their last drop after the interval, but may have another drop within the interval.}
    \label{fig:px_future_window}
\end{wrapfigure}
We do not only care about the total future compression potential of a string, but perhaps even more about the near-term potential.
To study this, we define the interval $d \coloneq \{t, \dots, t + \Delta c - 1\}$, demarcating a specific window of interest (the green shaded area in Figure~\ref{fig:px_future_window}). 
We want to establish the probability of a drop happening within $d$, and the expected compression progress within the interval.

Since so far, we only cared about the indefinite future, we were able to only rely on the properties associated with the \textit{last} drop. 
But now we care about \textit{any} drop within $d$.
This requires some additional care.

Apart from the case where the observed drop defined by $\hat{m}$ and $\hat{k}$ remains the last one, we distinguish three more scenarios:
\begin{itemize}[leftmargin=*, itemsep=0.5ex]
    \item \textbf{Drop profiles (Green):} The last drop occurs within the window $d$.
    \item \textbf{Void profiles (Red):} The last drop occurs after $t + \Delta c$, but the drop is too shallow to allow any intermediate drops within $d$.
    \item \textbf{Perhaps profiles (Yellow):} The last drop occurs after the window, but the complexity gap is large enough that a preceding drop could exist within $d$.
\end{itemize}

The number of strings in the ``drop'' is, defining $g(d)_\text{drop} \coloneq \sum_{m=t }^{t + \Delta c - 1} \sum_{k=m}^{\hat{k}-1} 2^{O(\log \hat{n})}= 2^{O(\log \hat{n})}$
\begin{align}
    N(d)_\text{drop} &\coloneq g(d)_\text{drop} \sum_{m=t }^{t + \Delta c - 1} \sum_{k=m}^{\hat{k}-1} 2^{k - m} \nonumber \\
    &= g(d)_\text{drop} \left( 2^{\hat{k} - t + 1} - 2^{\hat{k} - t - \Delta{c} + 1} - \Delta{c}\right) \nonumber \\
    &= 2^{\hat{k} - t + O(\log \hat{n})}- 2^{\hat{k} - t - \Delta{c} + O(\log \hat{n})}, \\
\intertext{which for $\hat{k} - t \gg 0$ can be simplified to}
    N(d)_\text{drop} &\approx g(d)_\text{drop} \left( 1 - 2^{-\Delta{c} } \right) 2^{\hat{k} - t} \label{eq:Nd_drop_approx}. \\
\intertext{Defining $g(d)_\text{void}$ and $g(d)_\text{perhaps}$ equivalently, we get}
    N(d)_\text{void} &\coloneq g_\text{void}(d) \sum_{m=t + \Delta c}^{\hat{k} - 1} 2^{\hat{k} - 1 - m} \nonumber \\
    &= g_\text{void}(d) \left( 2^{\hat{k} - t - \Delta{c}} - 1 \right) \nonumber \\
    &= 2^{\hat{k} - t - \Delta{c} + O(\log \hat{n})}\\
    \intertext{and }
    N(d)_\text{perhaps} &\coloneq g(d)_\text{perhaps} \sum_{m=t + \Delta c}^{\hat{k}-2} \sum_{k=m}^{\hat{k}-2} 2^{k - m } \nonumber \\
    &= g(d)_\text{perhaps} \left( 2^{\hat{k} - t - \Delta{c}} + \Delta{c} - \hat{k} + t - 1 \right) \nonumber \\
    &= 2^{\hat{k} - t - \Delta{c} + O(\log \hat{n})}.
\end{align}

Recall the value $N_\text{drop}$ defined in Equation~\ref{eq:n_drop}, that quantifies all strings with a drop after $\hat{P}$. 
The number of strings with a drop within $d$ is lower bounded by $N_\text{drop} - (N(d)_\text{void} + N(d)_\text{perhaps})$ and upper bounded by $N_\text{drop} - N(d)_\text{void}$.
This allows us to bound the probability:
\begin{align}
    \frac{N_\text{drop} - \left( N(d)_\text{void} + N(d)_\text{perhaps}\right)}{N_\text{drop} + N_\text{stop}} \leq p_L(\text{drop within } d \mid \hat{P}) \leq \frac{N_\text{drop} - N(d)_\text{void}}{N_\text{drop} + N_\text{stop}}
\end{align}

Notice that, due to the $O(\log \hat{n})$-term in the exponent,
\begin{align}
    N(d)_\text{void} + N(d)_\text{perhaps} = 2^{\hat{k} - t - \Delta{c} + O(\log \hat{n})} = N(d)_\text{void}.
\end{align}

Hence, 
\begin{align}
    p_L(\text{drop within } d \mid \hat{P}) &= \frac{N_\text{drop} - N(d)_\text{void}}{N_\text{drop} + N_\text{stop}} \nonumber \\
    &=  \frac{2^{O(\log \hat{n})} 2^{\hat{k} - t} - 2^{O(\log \hat{n})} 2^{\hat{k} - t - \Delta c}}{2^{O(\log \hat{n})} 2^{\hat{k} - t} + 2^{O(\log \hat{n})} 2^{\hat{k} - \hat{m}}} \nonumber \\
    &= 2^{\hat{m} - t + O(\log \hat{n})} \left(1 - 2^{-\Delta c}\right). \label{eq:p_drop_in_d_L}
\end{align}

We can see that the probability of a further drop is highly concentrated close to $t$, and intervals of larger size  $\Delta c$ increase the probability by an exponentially vanishing amount. 
Furthermore, since the probability of the last drop occurring within $d$ is $p_L(\text{last drop within } d \mid \hat{P}) = \frac{N_\text{drop} - \left( N(d)_\text{void} + N(d)_\text{perhaps}\right)}{N_\text{drop} + N_\text{stop}}$, it follows that, up to the usual logarithmic precision in the exponents,
\begin{align}
    p_L(\text{last drop within } d \mid \hat{P}) = p_L(\text{drop within } d \mid \hat{P}). \label{eq:p_last_drop_eq_drop_L}
\end{align}

This means that the profiles with the last drop within $d$ dominate the probability of having \textit{any} drop within $d$. 

\subsubsection{Expected Progress Within a Given Window}
\label{sec:expected_progress_within_d_L}

Similarly to Sections~\ref{sec:expected_given_drop_L} and \ref{sec:expected_L}, we can calculate the expected compression progress within $d$.
Again, we start with the expected value for $k$, given that the last drop is within the interval $d$. 
The weighted number of corresponding strings is
\begin{align}
    K_L(d) &\coloneq g(d)_\text{drop} \sum_{m=t }^{t + \Delta c - 1} \sum_{k=m}^{\hat{k}-1} k 2^{k - m} \nonumber \\
    &= g(d)_\text{drop}(\hat{k} - 2) (2^{\hat{k} - t + 1} - 2^{\hat{k} - t - \Delta c  + 1}) + g(d)_\text{drop}\frac{\Delta c}{2} (5 - 2t - \Delta c), \\
    K_L(d) &\approx g(d)_\text{drop} (\hat{k} - 2) \left( 1 - 2^{-\Delta{c} } \right) 2^{\hat{k} - t}.
\end{align}

With that, the expected value for $k$, assuming that the last drop happens within $d$ and using Equation~\ref{eq:Nd_drop_approx}, is 
\begin{align}
    \mathbb{E}_{x \sim L}[k \mid \hat{P}, \text{ last drop within } d] = \frac{K_L(d)}{N(d)_\text{drop}} \approx \hat{k} - 2.
\end{align}

From Equation~\ref{eq:p_last_drop_eq_drop_L} it follows that also
$$
\mathbb{E}_{x \sim L}[k \mid \hat{P}, \text{ drop within } d] \approx \hat{k} - 2.
$$

Now we can compute the expected compression progress within $d$.
Let $k_{<t + \Delta c}$ be the complexity of the last drop before $t + \Delta c$ (but not necessarily the last drop of the whole profile).
That means, $k_{<t + \Delta c}$ represents the ``achieved complexity'' within the window, regardless of whether the string eventually becomes even simpler later in the profile.
Then, 

\begin{align}
    \hat{k} - \mathbb{E}_{x \sim L}[k_{<t + \Delta c} \mid \hat{P}] 
    &= p_L(\text{drop within } d \mid \hat{P}) \left( \hat{k} - \mathbb{E}_{x \sim L}[k \mid \hat{P}, \text{ drop within } d] \right) \nonumber \\
    &\approx 2^{\hat{m} - t + O(\log \hat{n})} \left(1 - 2^{-\Delta c}\right). \label{eq:expected_interval_cp_length}
\end{align}

\subsection{Algorithmic Prior}
\label{app:proof_algorithmic}

The Length Prior from the previous section has the property that all strings of length $\hat{n}$ have the same probability.
This is not the case for the Algorithmic Prior $M$ (Equation~\ref{eq:algorithmic_prior}): 
Here, the probability of a string is determined by its complexity, i.e. the value $k$ of its profile $P$.
Nevertheless, we can proceed in a manner similar to before.

\subsubsection{Proof of Proposition~\ref{prop:algorithmic_prior}\ref{item:algo_prob}: Probability of Further Drop}
\begin{proof}
Instead of simply counting strings, we weigh each string with its prior probability.
The unnormalized statistical weight of profiles with no further drop under the Algorithmic Prior $M$ is
\begin{align}
    Z_\text{stop}^M &\coloneq 2^{\hat{k} - \hat{m} + O(\log \hat{n})} 2^{- \hat{k}} \nonumber \\
    &= 2^{O(\log \hat{n})}2^{- \hat{m}} \nonumber \\
    &= g_\text{stop} 2^{-\hat{m}}. \label{eq:z_stop_M}
\end{align}

\textit{With} at least one more drop, the weighted number of strings is
\begin{align}
    Z_\text{drop}^M &\coloneq \sum_{m=t}^{\hat{k}-1} \sum_{k=m}^{\hat{k}-1} 2^{k - m +  O(\log \hat{n})} 2^{-k} \nonumber \\
    &= g_\text{drop} \sum_{m=t}^{\hat{k}-1} \sum_{k=m}^{\hat{k}-1} 2^{- m} \nonumber \\
    &= g_\text{drop} \left( \left( \hat{k} - t - 1 \right) 2^{1-t} + 2 ^ {1-\hat{k}} \right) \nonumber \\
    &= 2^{-t + O(\log \hat{n})} + 2^{-\hat{k} + O(\log \hat{n})}
\end{align}

For $\hat{k} \gg t$, this is 
\begin{align}
    Z_\text{drop}^M \approx g_\text{drop}\left( \hat{k} - t - 1 \right) 2^{1-t}. \label{eq:z_drop_M_approx}
\end{align}

This yields the probability of a further drop 
\begin{align}
    p_M(m \geq t \mid \hat{P}) &= \frac{1}{1 + \frac{Z_\text{stop}^M}{Z_\text{drop}^M}} \nonumber \\
    &= \frac{1}{1 + \frac{2^{O(\log \hat{n})}}{2^{O(\log \hat{n})}} \frac{2^{-\hat{m}}}{\left( \left( \hat{k} - t - 1 \right) 2^{1-t} + 2 ^ {1-\hat{k}} \right)}} \nonumber \\
    &= \frac{1}{1 +  \frac{2^{-\hat{m} + O(\log \hat{n})}}{2^{-t + O(\log \hat{n})} + 2 ^ {-\hat{k} + O(\log \hat{n})} }}.
\end{align}

Here, we have a dependence on $\hat{k}$ in the denominator, which does not exist for the Length Prior (compare Equation~\ref{eq:p_drop_L}).
However, this dependence vanishes for $\hat{k} - t \gg 0$:

\begin{align}
    p_M(m \geq t \mid \hat{P}) &= \frac{1}{1 + \frac{Z_\text{stop}^M}{Z_\text{drop}^M}} \nonumber \\
    &\approx \frac{1}{1 +  \frac{2^{-\hat{m} + O(\log \hat{n})}}{2^{-t + O(\log \hat{n})}}} \nonumber \\
    &= 2^{-(t - \hat{m}) + O(\log \hat{n})} \label{eq:p_drop_M}
\end{align}
\end{proof}

\subsubsection{Probability of Specific Drop Position, Given $m \geq t$}
\begin{wrapfigure}[15]{r}{0.45\textwidth}
\begin{subfigure}[t]{0.45\textwidth}
\vspace{-6mm}
\centering
\includegraphics[trim={7mm 0mm 7mm 0mm},clip,width=1.\textwidth]{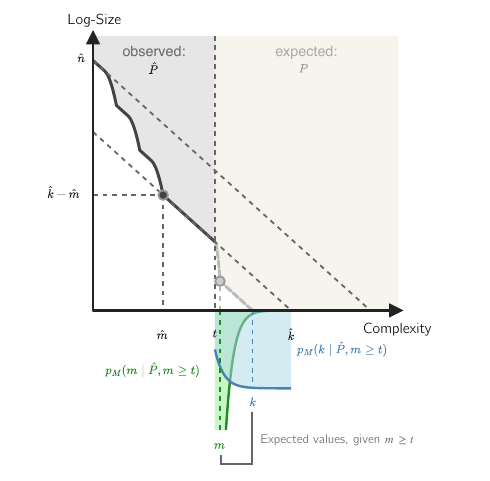}
\end{subfigure}
\vspace{-3mm}
\caption{Expected continuation of profile $\hat{P}$ under the Algorithmic Prior, assuming the existence of a further drop.}
    \label{fig:px_M_prior_drop}
\end{wrapfigure}
Given $\hat{P}$ and $m \geq t$, the number of strings with a profile where the last drop occurs at a specific complexity $m^*$, weighted by the Algorithmic Prior, is:
\begin{align}
    & \sum_{m=t}^{\hat{k}-1} \sum_{k=m}^{\hat{k}-1} \mathbf{1}[m=m^*]2^{k - m + O(\log \hat{n})} 2^{-k} \nonumber \\
    =\;& \sum_{k=m^*}^{\hat{k}-1} 2^{-m^* + O(\log \hat{n})} \nonumber \\
    =\;& 2^{-m^* + O(\log \hat{n})}. \label{eq:n_m_M}
\end{align}
The probability of this drop position is hence:
\begin{align}
    p_M(m = m^* \mid \hat{P}, m \geq t) = \frac{2^{-m^* + O(\log \hat{n})}}{Z_\text{drop}^M} \nonumber \\
    = \frac{2^{-m^* + O(\log \hat{n})}}{2^{-t + O(\log \hat{n})} + 2^{-\hat{k} + O(\log \hat{n})}}.
\end{align}

For $\hat{k} - t \gg 0$, this simplifies to 
\begin{align}
    p_M(m = m^* \mid \hat{P}, m \geq t) \approx 2^{t-m^* + O(\log \hat{n})},
\end{align}
which matches the Length Prior result in Equation~\ref{eq:p_m_L}.

In contrast, the weighted number of strings with a specific complexity $k^*$, given $\hat{P}$ and $m \geq t$, is:
\begin{align}
    &2^{O(\log \hat{n})} \sum_{m=t}^{\hat{k}-1} \sum_{k=m}^{\hat{k}-1} \mathbf{1}[k=k^*]2^{k - m} 2^{-k} \nonumber \\
    =\;& 2^{O(\log \hat{n})} \sum_{m=t}^{k^*} 2^{-m} \nonumber \\
    =\;& 2^{-t + O(\log \hat{n})} - 2^{-k^* + O(\log \hat{n})}.
\end{align}
The probability of this complexity is:
\begin{align}
    p_M(k = k^* \mid \hat{P}, m \geq t) &= \frac{2^{-t + O(\log \hat{n})} - 2^{-k^* + O(\log \hat{n})}}{2^{-t + O(\log \hat{n})} + 2^{-\hat{k} + O(\log \hat{n})}}.
\end{align}

For $\hat{k} - t \gg 0$, this yields:
\begin{align}
    p_M(k = k^* \mid \hat{P}, m \geq t) &\approx 1 - 2^{t - k^* + O(\log \hat{n})}. \label{eq:p_k_M}
\end{align}

Unlike the Length Prior (Equation~\ref{eq:p_k_L}, Figure~\ref{fig:px_L_prior_drop}), where the distribution of $k$ vanishes exponentially as it moves away from $\hat{k}$, the Algorithmic Prior results in a distribution that is nearly uniform for $k^* \gg t$ (see Figure~\ref{fig:px_M_prior_drop}).

\subsubsection{Proof of Proposition~\ref{prop:algorithmic_prior}\ref{item:algo_exp_m}: Expected Values, Given $m \geq t$}

Probabilistically weighted sums of the number of strings with a certain profile, multiplied with the respective value of $m$ and $k$:

\begin{align}
    M_M &\coloneq g_\text{drop} \sum_{m=t}^{\hat{k}-1} \sum_{k=m}^{\hat{k}-1} m2^{- m} \nonumber \\
    &= g_\text{drop} \left( \left( (\hat{k}  - t - 1) (2t + 2) - 4 \right) 2^{-t} + (2\hat{k} + 6) 2^{-\hat{k}} \right)  \\
    K_M &\coloneq g_\text{drop} \sum_{m=t}^{\hat{k}-1} \sum_{k=m}^{\hat{k}-1} k2^{- m} \nonumber \\
    &= g_\text{drop} \left(\left( (\hat{k}  - t - 1) (\hat{k} + t) - 2 \right) 2^{-t} + (2\hat{k} + 2) 2^{-\hat{k}} \right)
\end{align}

For $\hat{k} - t \gg 0$, these values simplify to 

\begin{align}
    M_M &\approx g_\text{drop}  (\hat{k}  - t - 1) (2t + 2) 2^{-t}, \\
    K_M &\approx g_\text{drop} (\hat{k}  - t - 1) (\hat{k} + t) 2^{-t}.
\end{align}

The expected values are then, using Equation~\ref{eq:z_drop_M_approx}
\begin{align}
    \mathbb{E}_{x \sim M}[m \mid \hat{P}, m \geq t] &= \frac{M_M}{Z_\text{drop}^M} \nonumber \\
    &\approx \frac{g_\text{drop}(\hat{k}  - t - 1) (2t + 2) 2^{-t}}{g_\text{drop}\left( \hat{k} - t - 1 \right) 2^{1-t} } \nonumber \\
    &= t+1, \label{eq:expected_m_M_conditioned} \\
    \mathbb{E}_{x \sim M}[k \mid \hat{P}, m \geq t] &= \frac{K_M}{Z_\text{drop}^M} \nonumber \\
    &\approx \frac{g_\text{drop} (\hat{k}  - t - 1) (\hat{k} + t) 2^{-t}}{g_\text{drop}\left( \hat{k} - t - 1 \right) 2^{1-t}} \nonumber \\
    &= \frac{\hat{k} + t}{2}. \label{eq:expected_k_M_conditioned}
\end{align}

Here, the approximate uniformity from Equation~\ref{eq:p_k_M} comes to bear: 
The expectation fo $k$ is significantly lower compared to the Length Prior (compare Equation~\ref{eq:expected_k_L_conditioned}).

\subsubsection{Proof of Proposition~\ref{prop:algorithmic_prior}~\ref{item:algo_progress}: Expected Progress, Conditioned Only on $\hat{P}$}
\begin{proof}
Nevertheless, just as with the Length Prior, the expected progress, conditioned only on $\hat{P}$ is dominated by the drop probability: 

\begin{align}
    \mathbb{E}_{x \sim M}[m \mid \hat{P}] - \hat{m} 
    &\approx 2^{-(t - \hat{m}) + O(\log \hat{n})}, \label{eq:expected_position_M} \\
    \intertext{and} 
    \hat{k}-\mathbb{E}_{x \sim M}[k \mid \hat{P}] 
    &\approx 2^{-(t - \hat{m}) + O(\log \hat{n})}. \label{eq:expected_cp_M}
\end{align}
\end{proof}

\subsubsection{Probability of a Drop Within a Given Window}
\label{sec:probability_of_drop_within_d_algo}

Analogously to the Length Prior (Section~\ref{sec:probability_of_drop_within_d_length}), we calculate the weight of the three cases:
\begin{align}
    Z(d)_\text{drop}^M &\coloneq g(d)_\text{drop} \sum_{m=t }^{t + \Delta c - 1} \sum_{k=m}^{\hat{k}-1} 2^{- m} \nonumber \\
    &= g(d)_\text{drop} 2^{-t+1} \left( \left( 1 - 2^{-\Delta c}\right) (\hat{k} - t - 1) + \Delta c 2^{-\Delta c} \right) \nonumber \\
    &= 2^{- t + O(\log \hat{n})}- 2^{- t - \Delta{c} + O(\log \hat{n})} \\
    Z(d)_\text{drop}^M &\approx g(d)_\text{drop} 2^{-t+1} \left( 1 - 2^{-\Delta c}\right) (\hat{k} - t) \label{eq:Zd_drop_approx_M} \\
\intertext{}
    Z(d)_\text{void}^M &\coloneq g_\text{void}(d) \sum_{m=t + \Delta c}^{\hat{k} - 1} 2^{-m - 1} \nonumber \\
    &= g_\text{void}(d) \left( 2^{- t - \Delta{c}} - 2^{-\hat{k}} \right) \nonumber \\
    &= 2^{ - t - \Delta{c} + O(\log \hat{n})} - 2^{ - \hat{k} + O(\log \hat{n})} \\
\intertext{}
    Z(d)_\text{perhaps}^M &\coloneq g(d)_\text{perhaps} \sum_{m=t + \Delta c}^{\hat{k}-2} \sum_{k=m}^{\hat{k}-2} 2^{- m } \nonumber \\
    &= g(d)_\text{perhaps} \left( (\hat{k} - t - \Delta c - 2) 2^{-t - \Delta c + 1} + 2^{- \hat{k} + 2} \right) \nonumber \\
    &= 2^{- t - \Delta{c} + O(\log \hat{n})} + 2^{- \hat{k} + O(\log \hat{n})} 
\end{align}

Using Equation~\ref{eq:z_drop_M_approx}, we can bound the probability:
\begin{align}
    \frac{Z_\text{drop}^M - \left( Z(d)_\text{void}^M + Z(d)_\text{perhaps}^M\right)}{Z_\text{drop}^M + Z_\text{stop}^M} \leq p_M(\text{drop within } d \mid \hat{P}) \leq \frac{Z_\text{drop}^M - Z(d)_\text{void}^M}{Z_\text{drop}^M + Z_\text{stop}^M}
\end{align}

And again, due to the $O(\log \hat{n})$-term in the exponent,
\begin{align}
    Z(d)_\text{void}^M + Z(d)_\text{perhaps}^M = 2^{ - t - \Delta{c} + O(\log \hat{n})} - 2^{ - \hat{k} + O(\log \hat{n})} = Z(d)_\text{void}^M.
\end{align}

Hence, 
\begin{align}
    p_M(\text{drop within } d \mid \hat{P}) &= \frac{Z_\text{drop}^M - Z(d)_\text{void}^M}{Z_\text{drop}^M + Z_\text{stop}^M} \nonumber \\
    &= 2^{O(\log \hat{n})}\frac{2^{-t} + 2^{-\hat{k}} - 2^{ - t - \Delta{c}} + 2^{ - \hat{k}}}{2^{-t} + 2^{-\hat{k}} + 2^{- \hat{m}}} \nonumber \\
    &= \frac{2^{-t + O(\log \hat{n})} \left( 1 - 2^{-\Delta c} \right)  + 2^{-\hat{k} + O(\log \hat{n})}}{2^{O(\log \hat{n})} \left(2^{-t} + 2^{-\hat{k}} + 2^{- \hat{m}}\right)},
\end{align}

and for $\hat{k} - t \gg 0$, 
\begin{align}
    p_M(\text{drop within } d \mid \hat{P})
    &\approx \frac{2^{-t + O(\log \hat{n})} \left( 1 - 2^{-\Delta c} \right)}{ 2^{-t + O(\log \hat{n})} + 2^{- \hat{m} + O(\log \hat{n})}} \nonumber \\
    &= 2^{- (t - \hat{m}) + O(\log \hat{n})} \left( 1 - 2^{-\Delta c} \right).
\end{align}

Also here, larger values of $\Delta c$ contribute only by an exponentially vanishing amount.
It again follows that
\begin{align}
    p_M(\text{last drop within } d \mid \hat{P}) = p_M(\text{drop within } d \mid \hat{P}). \label{eq:p_last_drop_eq_drop_M}
\end{align}

\subsubsection{Expected Progress Within a Given Window}

Same procedure as Section~\ref{sec:expected_progress_within_d_L}:
The weighted number of corresponding strings is
\begin{align}
    K_M(d) &\coloneq g(d)_\text{drop} \sum_{m=t }^{t + \Delta c - 1} \sum_{k=m}^{\hat{k}-1} k 2^{- m} \nonumber \\
    &= g(d)_\text{drop}2^{-t} \left( \left(1 - 2^{-\Delta c}\right) (\hat{k} + t)(\hat{k}  - t - 1) + 2^{-\Delta c} (\Delta c + 2)(\Delta c + 1) - 2\right), \nonumber \\
    K_M(d) &\approx g(d)_\text{drop}2^{-t} \left(1 - 2^{-\Delta c}\right) (\hat{k} + t)(\hat{k}  - t) .
\end{align}

The expected value for $k$, assuming that the last drop happens within $d$ and using Equation~\ref{eq:Zd_drop_approx_M}, is 
\begin{align}
    \mathbb{E}_{x \sim M}[k \mid \hat{P}, \text{ last drop within } d] 
    = \frac{K_M(d)}{Z(d)_\text{drop}^M } 
    \approx \frac{\hat{k} + t}{2}.
\end{align}

From Equation~\ref{eq:p_last_drop_eq_drop_M} it follows that
\begin{align}
\mathbb{E}_{x \sim M}[k \mid \hat{P}, \text{ drop within } d] \approx \frac{\hat{k} + t}{2},
\end{align}

and

\begin{align}
    \hat{k} - \mathbb{E}_{x \sim M}[k_{<t + \Delta c} \mid \hat{P}] 
    &= p_M(\text{drop within } d \mid \hat{P}) \left( \hat{k} - \mathbb{E}_{x \sim M}[k \mid \hat{P}, \text{ drop within } d] \right) \nonumber \\
    &\approx 2^{- (t - \hat{m}) + O(\log \hat{n})} \left( 1 - 2^{-\Delta c} \right). \label{eq:expected_interval_cp_M}
\end{align}

\subsection{Comparison Between Length and Algorithmic Prior}
\label{app:proof_comparison}

The analytic forms of the further drop probability (Equations~\ref{eq:p_drop_L} and \ref{eq:p_drop_M}), as well as the expected progress (Equations~\ref{eq:expected_cp_L} and \ref{eq:expected_cp_M}) are identical for Length and Algorithmic Prior.
However, we can show that there is in fact a difference between the two priors.

\subsubsection{Proof of Proposition~\ref{prop:comparison}\ref{item:comp_prob_ratio}: Ratio of Drop Probabilities}

\begin{proof}
Using Equations~\ref{eq:n_stop}, \ref{eq:n_drop_approx}, \ref{eq:z_stop_M}, and \ref{eq:z_drop_M_approx}, we can calculate the ratio of the drop probabilities:
\begin{align}
    \frac{p_M(m \geq t \mid \hat{P})}{p_L(m \geq t \mid \hat{P})} 
    &= \frac{Z_\text{drop}^M \left( N_\text{drop} + N_\text{stop} \right)}{N_\text{drop} \left( Z_\text{drop}^M + Z_\text{stop}^M \right)} \nonumber \\
    &=  \frac{g_\text{drop} \left( \hat{k} - t - 1 \right) 2^{1-t} \left( g_\text{drop}2^{\hat{k} - t + 1} + g_\text{stop} 2^{\hat{k} - \hat{m}} \right)}{g_\text{drop} 2^{\hat{k} - t + 1} \left( g_\text{drop}\left( \hat{k} - t - 1 \right) 2^{1-t} + g_\text{stop} 2^{-\hat{m}} \right)} \nonumber \\
    &= 1 + \frac{\left( \hat{k} - t - 2 \right) g_\text{stop} 2^{\hat{k} - t - \hat{m} + 1}}
    {g_\text{drop} \left( \hat{k} - t - 1 \right) 2^{\hat{k} - 2t +2} + g_\text{stop} 2^{\hat{k} - t - \hat{m} + 1}} \nonumber \\
    &= 1 + \frac{\hat{k} - t - 2}{1 + \frac{g_\text{drop}}{g_\text{stop}} \left( \hat{k} - t - 1 \right) 2^{-(t - \hat{m} - 1)}} \nonumber \\
    &= 1 + \frac{\hat{k} - t - 2}{1 + 2^{-(t - \hat{m}) + O(\log \hat{n})}}
\end{align}

For $t - \hat{m} \gg \log \hat{n}$, this simplifies to 
\begin{align}
    \frac{p_M(m \geq t \mid \hat{P})}{p_L(m \geq t \mid \hat{P})} 
    \approx \hat{k} - t - 1. \label{eq:p_drop_ratio}
\end{align}

This result implies that the future drop probability under the Algorithmic Prior is higher than under the Length Prior by a factor approximately linear in the complexity gap $\hat{k} - t$.
\end{proof}

\subsubsection{Proof of Proposition~\ref{prop:comparison}\ref{item:comp_progress_ratio}: Ratio of Compression Progress}
\begin{proof}
The ratio of expected compression progress is derived by combining the ratio of discovery probabilities with the ratio of conditional gains:
\begin{align}
    \frac{\hat{k}-\mathbb{E}_{x \sim M}[k \mid \hat{P}]}{\hat{k}-\mathbb{E}_{x \sim L}[k \mid \hat{P}]} 
    &= \frac{p_M(m \geq t \mid \hat{P}) \left( \hat{k} - \mathbb{E}_{x \sim M}[k \mid \hat{P}, m \geq t] \right)}{p_L(m \geq t \mid \hat{P}) \left( \hat{k} - \mathbb{E}_{x \sim L}[k \mid \hat{P}, m \geq t] \right)}. \label{eq:ratio_setup}
\end{align}
Assuming $\hat{k} - t \gg 0$ and $t - \hat{m} \gg \log \hat{n}$, we substitute the results from Equations~\ref{eq:expected_k_L_conditioned}, \ref{eq:expected_k_M_conditioned}, and \ref{eq:p_drop_ratio}:
\begin{align}
    \frac{\hat{k}-\mathbb{E}_{x \sim M}[k \mid \hat{P}]}{\hat{k}-\mathbb{E}_{x \sim L}[k \mid \hat{P}]} 
    &\approx \left( \hat{k} - t - 1 \right) \frac{\hat{k} - \frac{1}{2} \left( \hat{k} + t\right)}{\hat{k} - \left(\hat{k} - 2\right)} \nonumber \\
    &= \frac{1}{4} \left( \hat{k} - t - 1 \right) \left( \hat{k} - t \right). \label{eq:ratio_quadratic}
\end{align}
Thus, the expected compression progress under the Algorithmic Prior is larger than under the Length Prior by a factor that is approximately quadratic in the remaining complexity gap $\hat{k} - t$.
\end{proof}

\subsection{Speed Prior}
\label{app:proof_speed}

The Speed Prior (Equation~\ref{eq:speed_prior}) assesses the probability of a string based on both the length and the log runtime of programs computing that string.
Using the $P_x \leftrightarrow D_x$ correspondence (Equation~\ref{eq:profile_transform}), we can convert the value $m$ into a (extremely fast-growing) runtime, which is punished by the Speed Prior. 
For any sufficiently large value $i$, $\log \text{BB}(i) / \log \text{BB}(i+1) \approx 0$.

\subsubsection{Proof of Proposition~\ref{prop:speed_prior}\ref{item:speed_prob}: Probability of Further Drop}
\begin{proof}
Unnormalized statistical weight of profiles with no further drop under the Speed Prior $S$:
\begin{align}
    Z_\text{stop}^S &\coloneq 2^{\hat{k} - \hat{m} + O(\log \hat{n})} 2^{- \hat{k} - \log \text{BB}(\hat{m})} \nonumber \\
    &= g_\text{stop}2^{- \hat{m} - \log \text{BB}(\hat{m})}
\end{align}

For continuations with at least one more drop, the weight is:
\begin{align}
    Z_\text{drop}^S &\coloneq \sum_{m=t}^{\hat{k}-1} \sum_{k=m}^{\hat{k}-1} 2^{k - m +  O(\log \hat{n})} 2^{-k - \log \text{BB}(m)} \nonumber \\
    &\approx g_\text{drop} \sum_{k=t}^{\hat{k}-1} 2^{-t-\log \text{BB}(t)} \nonumber \\
    &= g_\text{drop} (\hat{k}-t)2^{-t-\log \text{BB}(t)}, \label{eq:z_drop_S}
\end{align}
where the approximation follows from the fact that the $m=t$ term dominates due to the extreme growth of $\text{BB}$.
This yields the probability of a further drop
\begin{align}
    p_S(m \geq t \mid \hat{P}) &= \frac{Z_\text{drop}^S}{Z_\text{drop}^S + Z_\text{stop}^S} \nonumber \\ 
    &\approx \frac{g_\text{drop}(\hat{k}-t)2^{-t-\log \text{BB}(t)}}{g_\text{drop}(\hat{k}-t)2^{-t-\log \text{BB}(t)} + g_\text{stop}2^{- \hat{m} - \log \text{BB}(\hat{m})}} \nonumber \\
    &\approx 0,
\end{align}
since $t > \hat{m}$.
\end{proof}

\subsubsection{Probabilities of Specific Drop Positions Given a Further Drop}
\begin{wrapfigure}[11]{r}{0.45\textwidth}
\begin{subfigure}[t]{0.45\textwidth}
\vspace{-6mm}
\centering
\includegraphics[trim={7mm 0mm 7mm 0mm},clip,width=1.\textwidth]{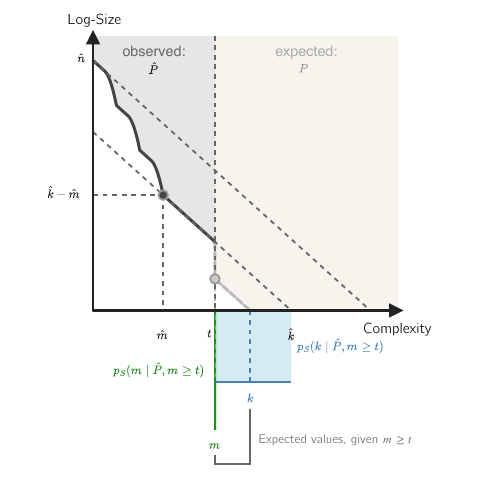}
\end{subfigure}
\vspace{-3mm}
\caption{Expected continuation of profile $\hat{P}$ under the Speed Prior, assuming the existence of a further drop.}
    \label{fig:px_S_prior_drop}
\end{wrapfigure}
Given $\hat{P}$ and $m \geq t$, we evaluate the weighted number of strings where the last drop occurs at complexity $m^*$:
\begin{align}
    &\sum_{m=t}^{\hat{k}-1} \sum_{k=m}^{\hat{k}-1} \mathbf{1}[m=m^*]2^{k - m + O(\log \hat{n})} 2^{-k - \log \text{BB}(m)} \nonumber \\
    =\;& \sum_{k=m^*}^{\hat{k}-1} 2^{-m^* -\log \text{BB}(m^*) + O(\log \hat{n})} \nonumber \\
    =\;& 2^{-m^* -\log \text{BB}(m^*) + O(\log \hat{n})}. 
\end{align}

The probability of this drop position, using Equation~\ref{eq:z_drop_S}, is hence
\begin{align}
    p_S(m = m^* \mid \hat{P}, m \geq t) &= \frac{2^{-m^* -\log \text{BB}(m^*) + O(\log \hat{n})} }{Z_\text{drop}^S} \nonumber \\ 
    &\approx 
\begin{cases} 
1 \text{ \;if } m^* = t \\
0 \text{ \;otherwise.}
\end{cases}
\end{align}

Similarly, the number of strings with a specific complexity $k^*$, given $\hat{P}$ and $m \geq t$, is
\begin{align}
    & \sum_{m=t}^{\hat{k}-1} \sum_{k=m}^{\hat{k}-1} \mathbf{1}[k=k^*]2^{k - m + O(\log \hat{n})} 2^{-k - \log \text{BB}(m)} \nonumber \\
    =\;& \sum_{m=t}^{k^*} 2^{-m - \log \text{BB}(m) + O(\log \hat{n}) } \nonumber \\
    =\;& 2^{-t - \log \text{BB}(t) + O(\log \hat{n}) }.
\end{align}
The resulting probability distribution for $k^*$ is approximately constant across the available range (see Figure~\ref{fig:px_S_prior_drop}):
\begin{align}
    p_S(k = k^* \mid \hat{P}, m \geq t) = \frac{2^{-t - \log \text{BB}(t) + O(\log \hat{n}) }}{Z_\text{drop}^S} = 2^{O(\log \hat{n})}.
\end{align}

\subsubsection{Proof of Proposition~\ref{prop:speed_prior}\ref{item:speed_exp_m} \& \ref{item:speed_progress}: Expected Values}

\begin{proof}
Probabilistically weighted sums of the number of strings with a certain profile, multiplied with the respective value of $m$ and $k$:
\begin{align}
    M_S &\coloneq g_\text{drop}\sum_{m=t}^{\hat{k}-1} \sum_{k=m}^{\hat{k}-1} m2^{- m- \log \text{BB}(m)} \nonumber \\
    &\approx g_\text{drop}\sum_{k=t}^{\hat{k}-1} t2^{-t-\log \text{BB}(t)} \nonumber \\
    &= g_\text{drop} \; t(\hat{k}-t)2^{-t-\log \text{BB}(t)}\\
    K_S &\coloneq g_\text{drop} \sum_{m=t}^{\hat{k}-1} \sum_{k=m}^{\hat{k}-1} k2^{- m- \log \text{BB}(m)} \nonumber \\
    &\approx g_\text{drop} \sum_{k=t}^{\hat{k}-1} k2^{-t-\log \text{BB}(t)} \nonumber \\
    &= g_\text{drop} (\hat{k}-t)(\hat{k} + t - 1) 2^{-t - 1 -\log \text{BB}(t)} 
\end{align}

The expected values are then
\begin{align}
    \mathbb{E}[m \mid \hat{P}, m \geq t] &\approx \frac{M_S}{Z_\text{drop}^S} = t, \text{and} \\
    \mathbb{E}[k \mid \hat{P}, m \geq t] &\approx \frac{K_S}{Z_\text{drop}^S} = \frac{\hat{k} + t - 1}{2}.
\end{align}

Since the probability of a further drop $p_S(m \geq t \mid \hat{P}) \approx 0$, the unconditioned expectations remain at their observed values:
\begin{align}
    \mathbb{E}[m \mid \hat{P}] \approx \hat{m}, \text{ and } \mathbb{E}[k \mid \hat{P}] \approx \hat{k}. \label{eq:exp_uncond_S}
\end{align}
Consequently, the expected compression progress is $0$.
\end{proof}

Following this same logic for a finite window $d$, both the drop probability and the expected progress vanish:
\begin{align}
    p_S(\text{drop within } d \mid \hat{P}) \approx 0, \text{ and } \mathbb{E}_{x \sim S}[k_{<t + \Delta c} \mid \hat{P}] \approx \hat{k}. \label{eq:interval_results_S}
\end{align}

\end{document}